\newcommand{\alglinelabel}{%
  \addtocounter{ALC@line}{-1}
  \refstepcounter{ALC@line}
  \label
}
\newcommand{\calM}{\ensuremath{\mathcal{M}}}
\newcommand{\calN}{\ensuremath{\mathcal{N}}}
\newcommand{\calP}{\ensuremath{\mathcal{P}}}
\newcommand{\calQ}{\ensuremath{\mathcal{Q}}}
\newcommand{\calW}{\ensuremath{\mathcal{W}}}
\newcommand{\calX}{\ensuremath{\mathcal{X}}}
\newcommand{\calY}{\ensuremath{\mathcal{Y}}}
\newtheorem{lemma}{Lemma}[section]
\newtheorem{theorem}[lemma]{Theorem}
\newtheorem{definition}[lemma]{Definition}
\newcommand{\adassp}{\mathsf{AdaSSP}}
\newcommand{\alg}{\mathsf{TukeyEM}}
\newcommand{\dpsgd}{\mathsf{DPSGD}}
\newcommand{\E}[2]{\mathbb{E}_{#1}\left[ #2 \right]}
\newcommand{\eps}{\varepsilon}
\newcommand{\indic}[1]{\mathbbm{1}_{#1}}
\newcommand{\lap}[1]{\mathsf{Lap}\left(#1\right)}
\newcommand{\nondp}{\mathsf{NonDP}}
\renewcommand{\P}[2]{\mathbb{P}_{#1}\left[#2\right]}
\newcommand{\ptr}{\mathsf{PTRCheck}}
\newcommand{\safe}[1]{\mathtt{Safe}_{(#1)}}
\newcommand{\rtukeyem}{\mathsf{RestrictedTukeyEM}}
\newcommand{\samplepoint}{\mathsf{SamplePointWithDepth}}
\newcommand{\unsafe}[1]{\mathtt{Unsafe}_{(#1)}}
\newcommand{\vol}{\mathsf{vol}}
\newcommand{\arxiv}[1]{}
\newcommand{\narxiv}[1]{#1}
\begin{document}
\title{Easy Differentially Private \\ Linear Regression}

\author{Kareem Amin \And
Matthew Joseph \And
Mónica Ribero \And Sergei Vassilvitskii\thanks{\{kamin, mtjoseph, mribero, sergeiv\}@google.com. Part of this work done while Mónica was at UT Austin.}
}

\maketitle

\begin{abstract}
    Linear regression is a fundamental tool for statistical analysis. This has motivated the development of linear regression methods that also satisfy differential privacy and thus guarantee that the learned model reveals little about any one data point used to construct it. However, existing differentially private solutions assume that the end user can easily specify good data bounds and hyperparameters. Both present significant practical obstacles. In this paper, we study an algorithm which uses the exponential mechanism to select a model with high Tukey depth from a collection of non-private regression models. Given $n$ samples of $d$-dimensional data used to train $m$ models, we construct an efficient analogue using an approximate Tukey depth that runs in time $O(d^2n + dm\log(m))$. We find that this algorithm obtains strong empirical performance in the data-rich setting with no data bounds or hyperparameter selection required.
\end{abstract}

\section{Introduction}
\label{sec:intro}
Existing methods for differentially private linear regression include objective perturbation~\citep{KST12}, ordinary least squares (OLS) using  noisy sufficient statistics~\citep{DTTZ14, W18, S19}, and DP-SGD~\citep{ACGMMTZ16}. Carefully applied, these methods can obtain high utility in certain settings. However, each method also has its drawbacks. Objective perturbation and sufficient statistics require the user to provide bounds on the feature and label norms, and DP-SGD requires extensive hyperparameter tuning (of clipping norm, learning rate, batch size, and so on).

In practice, users of differentially private algorithms struggle to provide instance-specific inputs like feature and label norms without looking at the private data \citep{SSHSV22}. Unfortunately, looking at the private data also nullifies the desired differential privacy guarantee. Similarly, while recent work has advanced the state of the art of private hyperparameter tuning~\citep{LT19, PS22}, non-private hyperparameter tuning remains the most common and highest utility approach in practice. Even ignoring its (typically elided) privacy cost, this tuning adds significant time and implementation overhead. Both considerations present obstacles to differentially private linear regression in practice.

With these challenges in mind, the goal of this work is to provide an easy differentially private linear regression algorithm that works quickly and with no user input beyond the data itself. Here, ``ease'' refers to the experience of end users. The algorithm we propose requires care to construct and implement, but it only requires an end user to specify their dataset and desired level of privacy. We also emphasize that ease of use, while nice to have, is not itself the primary goal. Ease of use affects both privacy and utility, as an algorithm that is difficult to use will sacrifice one or both when data bounds and hyperparameters are imperfectly set. 

\subsection{Contributions}
Our algorithm generalizes previous work by~\citet{AMSSV22}, which proposes a differentially private variant of the Theil-Sen estimator for one-dimensional linear regression~\citep{T92}. The core idea is to partition the data into $m$ subsets, non-privately estimate a regression model on each, and then apply the exponential mechanism with some notion of depth to privately estimate a high-depth model from a restricted domain that the end user specifies. In the simple one-dimensional case~\citep{AMSSV22} each model is a slope, the natural notion of high depth is the median, and the user provides an interval for candidate slopes.

We generalize this in two ways to obtain our algorithm, $\alg$. The first step is to replace the median with a multidimensional analogue based on Tukey depth. Second, we adapt a technique based on propose-test-release (PTR), originally introduced by~\citet{BGSUZ21} for private estimation of unbounded Gaussians, to construct an algorithm which does not require bounds on the domain for the overall exponential mechanism. We find that a version of $\alg$ using an approximate and efficiently computable notion of Tukey depth achieves empirical performance competitive with (and often exceeding) that of \emph{non-privately tuned} baseline private linear regression algorithms, across several synthetic and real datasets. We highlight that the approximation only affects utility and efficiency; $\alg$ is still differentially private. Given an instance where $\alg$ constructs $m$ models from $n$ samples of $d$-dimensional data, the main guarantee for our algorithm is the following:
\begin{theorem}
\label{thm:main_approx}
    $\alg$~is $(\eps, \delta)$-DP and takes time $O(d^2n + dm\log(m))$.
\end{theorem}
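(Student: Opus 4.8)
The plan is to treat the two halves of the statement separately, and for privacy to decompose $\alg$ into two stages and argue by (basic) composition. The first stage is a propose-test-release step, $\ptr$, which runs a noisy test certifying that it is ``safe'' to run the exponential mechanism over an effectively unbounded candidate domain; if the test fails, the algorithm aborts. The second stage, $\rtukeyem$, runs the exponential mechanism with the approximate Tukey depth as utility, restricted to the domain certified safe by the first stage. The first ingredient I would establish is a sensitivity bound: since the $n$ samples are partitioned across the $m$ subsets, changing one sample changes exactly one of the $m$ non-private models, and the (approximate) Tukey depth of any fixed point with respect to a multiset of $m$ points changes by at most $1$ when one of those points moves; hence $\rtukeyem$ is $\eps_2$-DP for the appropriate scaling of $\eps_2$. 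The second ingredient is the PTR analysis: following the unbounded-Gaussian argument of \citet{BGSUZ21}, I would show that the quantity tested by $\ptr$ lower-bounds the distance, measured in number of changed samples, from the current dataset to the nearest dataset on which $\rtukeyem$ over the unbounded domain would behave badly (e.g. place non-negligible mass far outside the intended region, or violate the sensitivity bound), so that adding $\lap{\cdot}$ noise and thresholding yields an $(\eps_1,\delta)$-DP test whose failure probability on any bad instance is at most $\delta$. Conditioned on the test passing, $\rtukeyem$ runs on a fixed bounded domain and is $\eps_2$-DP, so composition gives that $\alg$ is $(\eps_1+\eps_2,\delta)$-DP, which is $(\eps,\delta)$-DP for the chosen split $\eps=\eps_1+\eps_2$.

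\textbf{Running time.} I would bound the cost of the three main components. (i) Forming the $m$ non-private OLS models: if subset $i$ has $n_i$ points, forming $X_i^\top X_i$ and $X_i^\top y_i$ costs $O(n_i d^2)$ and solving the resulting $d\times d$ system costs $O(d^3)$, so summing over $i$ gives $O(nd^2 + md^3)$; since each subset must contain at least $d$ points for OLS to be defined we have $md\le n$, hence $md^3\le nd^2$ and this term is $O(nd^2)$. (ii) Evaluating the approximate Tukey depth: I would use the structural fact that its level sets are axis-aligned boxes, so the relevant depths are determined by the sorted order of the $m$ model values within each of the $d$ coordinates; sorting all coordinates costs $O(dm\log m)$, after which $\volsub$ computes the volume of each depth region as a product over coordinates in a further $O(dm)$ time. (iii) Sampling from the exponential mechanism: $\samplepoint$ first samples a target depth with probability proportional to (region volume) $\times \exp(\eps\cdot\text{depth}/2)$, an $O(m)$ computation given the volumes, and then samples a point uniformly from the corresponding box coordinate by coordinate in $O(d)$ time; the $\ptr$ test is computed from the same sorted data in $O(dm)$ additional time. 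Adding the terms yields $O(d^2 n + dm\log m)$.

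\textbf{Main obstacle.} The delicate part is the PTR analysis inside the privacy proof: I need to pin down precisely what ``unsafe'' means for $\rtukeyem$ on the unbounded domain, show that the easily computable quantity tested by $\ptr$ genuinely lower-bounds the sample-distance to the nearest unsafe dataset --- this is where the box structure of the approximate depth and the partition structure interact --- and verify that on the safe side the sensitivity-$1$ guarantee survives restriction to the certified domain. A secondary check is that replacing exact Tukey depth by the approximate version, which is what makes the $O(dm\log m)$ depth computation and sampling possible, does not inflate the sensitivity beyond $1$; this should follow from the same ``one changed sample changes one model, which changes any depth by at most one'' argument, but it must be stated for the approximate notion specifically, together with the correctness of $\volsub$ and $\samplepoint$ as exact samplers for the exponential mechanism with that utility.
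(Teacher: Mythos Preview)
Your overall decomposition and your runtime analysis match the paper's. There is, however, a genuine misallocation in your privacy accounting that reflects a misreading of what PTR is certifying.

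The $\ptr$ check is \emph{pure} $\eps/2$-DP --- it is Laplace noise on a $1$-sensitive statistic followed by a threshold --- not $(\eps_1,\delta)$-DP as you write. The $\delta$ enters through $\rtukeyem$, not through the check. The output domain of $\rtukeyem$ is the \emph{data-dependent} region $\{y:\tilde T_D(y)\ge m/4\}$, not a ``fixed bounded domain'', so the standard exponential-mechanism argument does \emph{not} give pure $\eps_2$-DP even conditioned on the test passing: a neighboring database shifts the domain itself, not just the scores. This is exactly why ``safe'' in \cref{def:safety} is defined as $(\eps,\delta)$-indistinguishability of the restricted mechanism between neighbors, rather than as any property of the domain alone. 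The paper's accounting is therefore: the check is $\eps/2$-DP; conditioned on it passing, the database lies in $\safe{\eps/2,\delta/2,t}$ except with probability at most $\delta/2$, so the sampling step is $(\eps/2,\delta)$-DP; composition then gives $(\eps,\delta)$. Your ``main obstacle'' paragraph correctly flags the delicate step, but the resolution is not that PTR certifies a fixed domain on which the usual sensitivity argument runs --- it is \cref{lem:ptr_2}, which shows the volume-ratio quantity is a $1$-sensitive lower bound on Hamming distance to the unsafe set, after which the definition of safety absorbs the data-dependence of the domain into the $\delta$ carried by the second stage.
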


\arxiv{It's instructive to compare the runing time to non-private OLS. The first term captures the running time of computing $m$ OLS models on $d$ dimensional datasets, each of size $\nicefrac{n}{m}$. The second term bounds the running time of privately sampling a high Tukey depth model. For the reasonable range of $m$, $1 \leq m \leq \nicefrac{n}{d}$, the second term is subsumed by the first, unless $n \gg 2^{d^2}$.}

Two caveats apply. First, our use of PTR comes at the cost of an approximate $(\eps, \delta)$-DP guarantee as well as a failure probability: depending on the dataset, it is possible that the PTR step fails, and no regression model is output. Second, the algorithm technically has one hyperparameter, the number $m$ of models trained. Our mitigation of both issues is empirical. Across several datasets, we observe that a simple heuristic about the relationship between the number of samples $n$ and the number of features $d$, derived from synthetic experiments, typically suffices to ensure that the PTR step passes and specifies a high-utility choice of $m$. For the bulk of our experiments, the required relationship is on the order of $n \gtrsim 1000 \cdot d$. We emphasize that this heuristic is based only on the data dimensions $n$ and $d$ and does not require further knowledge of the data itself.\arxiv{\footnote{For simplicity, this paper uses swap privacy, where $n$ is not private data. However, since the requirements on $n$ are coarse, a conservative noisy lower bound on $n$ suffices in the add-remove model.}}

\arxiv{\paragraph{Organization.} Basic preliminaries appear in Section~\ref{sec:prelims}. A full exposition of our private algorithm using approximate Tukey depth appears in Section~\ref{sec:algorithm}, with experiments in Section~\ref{sec:experiments}. We conclude with a discussion and some future directions in Section~\ref{sec:conclusion}.}

\subsection{Related Work}
Linear regression is a specific instance of the more general problem of convex optimization. Ignoring dependence on the parameter and input space diameter for brevity, DP-SGD~\citep{BST14} and objective perturbation~\citep{KST12} obtain the optimal $O(\sqrt{d}/\eps)$ error for empirical risk minimization. AdaOPS and AdaSSP also match this bound~\citep{W18}. Similar results are known for population loss~\citep{BFTT19}, and still stronger results using additional statistical assumptions on the data~\citep{CWZ20, VTJ22}. Recent work provides theoretical guarantees with no boundedness assumptions on the features or labels~\citep{MKFI22} but requires bounds on the data's covariance matrix to use an efficient subroutine for private Gaussian estimation and does not include an empirical evaluation. The main difference between these works and ours is empirical utility without data bounds and hyperparameter tuning.

Another relevant work is that of~\citet{LKO21}, which also composes a PTR step adapted from~\citet{BGSUZ21} with a call to a restricted exponential mechanism\arxiv{ for estimating parameters, with output domain restricted to parameters of low loss}. The main drawback of this work is that, as with the previous work~\citep{BGSUZ21}, neither the PTR step nor the restricted exponential mechanism step is efficient. This applies to other works that have applied Tukey depth to private estimation as well~\citep{BMNS19, KSS20, LKKO21, RC21}. The main difference between these works and ours is that our approach produces an efficient, implemented mechanism.

Finally, concurrent independent work by~\citet{C22} also studies the usage of Tukey depth, as well as the separate notion of regression depth, to privately select from a collection of non-private regression models. A few differences exist between their work and ours. First, they rely on additive noise scaled to smooth sensitivity to construct a private estimate of a high-depth point. Second, their methods are not computationally efficient beyond small $d$, and are only evaluated for $d \leq 2$. Third, their methods require the end user to specify bounds on the parameter space.
\section{Preliminaries}
\label{sec:prelims}
We start with the definition of differential privacy, using the ``add-remove'' variant.

\begin{definition}[\cite{DMNS06}]
Databases $D,D'$ from data domain $\calX$ are \emph{neighbors}, denoted $D \sim D'$, if they differ in the presence or absence of a single record. A randomized mechanism $\calM:\calX \to \calY$ is \emph{$(\eps, \delta)$-differentially private} (DP) if for all $D \sim D'\in \calX$ and any $S\subseteq \calY$
\begin{equation*}
    \P{\calM}{\calM(D) \in S} \leq e^{\eps}\P{\calM}{\calM(D') \in S} + \delta.
\end{equation*}
\arxiv{When $\delta=0$, $\calM$ is $\eps$-DP.}
\end{definition}

\narxiv{When $\delta=0$, $\calM$ is $\eps$-DP.} One general $\eps$-DP algorithm is the exponential mechanism.

\begin{definition}[\cite{MT07}]
\label{def:em}
    Given database $D$ and utility function $u:\calX \times \calY \to \mathbb{R}$ mapping $(\text{database}, \text{output})$ pairs to scores with sensitivity
    \[
        \Delta_u =\max_{D \sim D', y \in \calY} |u(D,y) -u(D',y)|,
    \]
    the \emph{exponential mechanism} selects item $y \in \calY$ with probability proportional to $\exp{(\frac{\epsilon u(D,y)}{2\Delta_u})}$. We say the utility function $u$ is \emph{monotonic} if, for $D_1 \subset D_2$, for any $y$, $u(D_1, y) \leq u(D_2, y)$. Given monotonic $u$, the 2 inside the exponent denominator can be dropped.
\end{definition}

\begin{lemma}[\cite{MT07}]
    The exponential mechanism is $\epsilon$-DP.
\end{lemma}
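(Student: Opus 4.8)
The plan is the standard two-sided likelihood-ratio argument. Write the exponential mechanism's output density (mass function in the discrete case) explicitly as
\[
  p_D(y) \;=\; \frac{\exp\!\left(\frac{\eps\, u(D,y)}{2\Delta_u}\right)}{\int_{\calY}\exp\!\left(\frac{\eps\, u(D,y')}{2\Delta_u}\right)dy'},
\]
fix neighbors $D \sim D'$ and an output $y$, and aim to bound the pointwise ratio $p_D(y)/p_{D'}(y)$ by $e^{\eps}$. Once this pointwise bound is in hand, integrating (or summing) over an arbitrary measurable $S \subseteq \calY$ gives $\P{\calM}{\calM(D)\in S}\le e^{\eps}\,\P{\calM}{\calM(D')\in S}$, which is $(\eps,0)$-DP.

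To bound the ratio, split it as the product of a numerator term and a (reciprocal) normalizer term. For the numerator, the sensitivity definition gives $|u(D,y)-u(D',y)|\le\Delta_u$, hence $\exp\!\big(\tfrac{\eps u(D,y)}{2\Delta_u}\big)\big/\exp\!\big(\tfrac{\eps u(D',y)}{2\Delta_u}\big)\le e^{\eps/2}$. For the normalizer, use the same pointwise bound $u(D',y')\le u(D,y')+\Delta_u$ inside the integral defining the normalizing constant of $p_{D'}$, pull out the resulting factor $e^{\eps/2}$, and cancel the remaining integral against the normalizing constant of $p_D$; this shows the ratio of normalizers is also at most $e^{\eps/2}$. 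Multiplying the two bounds yields $p_D(y)/p_{D'}(y)\le e^{\eps}$.

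For the monotonic case, where the $2$ in the exponent is dropped, I would run the same split but use monotonicity to make one of the two factors trivial. Since neighbors are nested, write them as $D_1\subset D_2$. Comparing $p_{D_2}$ to $p_{D_1}$: monotonicity ($u(D_1,y')\le u(D_2,y')$) makes the normalizer ratio at most $1$, while the one-sided bound $u(D_2,y)-u(D_1,y)\le\Delta_u$ handles the numerator, giving $e^{\eps}$. Comparing $p_{D_1}$ to $p_{D_2}$: the roles swap — the numerator ratio is at most $1$ and the normalizer contributes the factor $e^{\eps}$. Either direction gives the claim.

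There is no substantive obstacle here; the only subtlety worth flagging is that the factor $e^{\eps/2}$ genuinely appears \emph{twice} in the non-monotonic case (once from the numerator, once from the normalizer), which is exactly what forces the $2$ in the exponent, and that in the continuous setting one should note the normalizing integral is finite so that $p_D$ is well defined — which holds for every use of the mechanism in this paper, since the relevant output domain $\calY$ is bounded.
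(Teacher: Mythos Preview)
Your argument is correct and is exactly the standard proof of the exponential mechanism's privacy guarantee. Note, however, that the paper does not supply its own proof of this lemma: it simply states the result with a citation to \cite{MT07}, so there is nothing in the paper to compare against beyond observing that your proof is the classical one from that reference.
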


Finally, we define Tukey depth.

\begin{definition}[\cite{T75}]
\label{def:tukey}
    A \emph{halfspace} $h_v$ is defined by a vector $v \in \mathbb{R}^d$, $h_v = \{y \in \mathbb{R}^d \mid \langle v, y \rangle \geq 0\}$. Let $D \subset \mathbb{R}^d$ be a collection of $n$ points. The \emph{Tukey depth} $T_D(y)$ of a point $y \in  \mathbb{R}^d$ with respect to $D$ is the minimum number of points in $D$ in any halfspace containing $y$,
    \[
        T_D(y) =  \min_{h_v \mid y \in h_v} \sum_{x \in D} \indic{x \in h_v}.
    \]
\end{definition}

Note that for a collection of $n$ points, the maximum possible Tukey depth is $\nicefrac{n}{2}$. We will prove a theoretical utility result for a version of our algorithm that uses exact Tukey depth. However, Tukey depth is NP-hard to compute for arbitrary $d$~\citep{J78}, so our experiments instead use a notion of approximate Tukey depth that can be computed efficiently. The approximate notion of Tukey depth only takes a minimum over the $2d$ halfspaces corresponding to the canonical basis.

\begin{definition}
\label{def:tukey_approx}
Let $E = \{e_1, ..., e_d\}$ be the canonical basis for $\mathbb{R}^d$ and let $D \subset \mathbb{R}^d$. The \emph{approximate Tukey depth} of a point $y \in \mathbb{R}^d$ with respect to $D$, denoted $\tilde T_D(y)$, is the minimum number of points in $D$ in any of the $2d$ halfspaces determined by $E$ containing $y$,
\[
    \tilde T_D(y) = \min_{e_j \mid e_j \in \pm E, y \in h_{y_i \cdot e_j}} \sum_{x \in D} \indic{x \in h_{y_i \cdot e_j}}.
\]
\end{definition}

Stated more plainly, approximate Tukey depth only evaluates depth with respect to the $d$ axis-aligned directions.\arxiv{\footnote{Another reasonable approach evaluates depth with respect to random directions. However, measuring and sampling from the resulting regions of different depth, which are intersections of random halfspaces, is significantly more complicated. With the canonical basis, regions of different depth are simply rectangles.}} A simple illustration of the difference between exact and approximate Tukey depth appears in \cref{fig:depth_example} \narxiv{in the Appendix's \cref{subsec:depth_illustration}}. For both exact and approximate Tukey depth, when $D$ is clear from context, we omit it for neatness.

\arxiv{\begin{figure}
    \centering
    \includegraphics[scale=0.5]{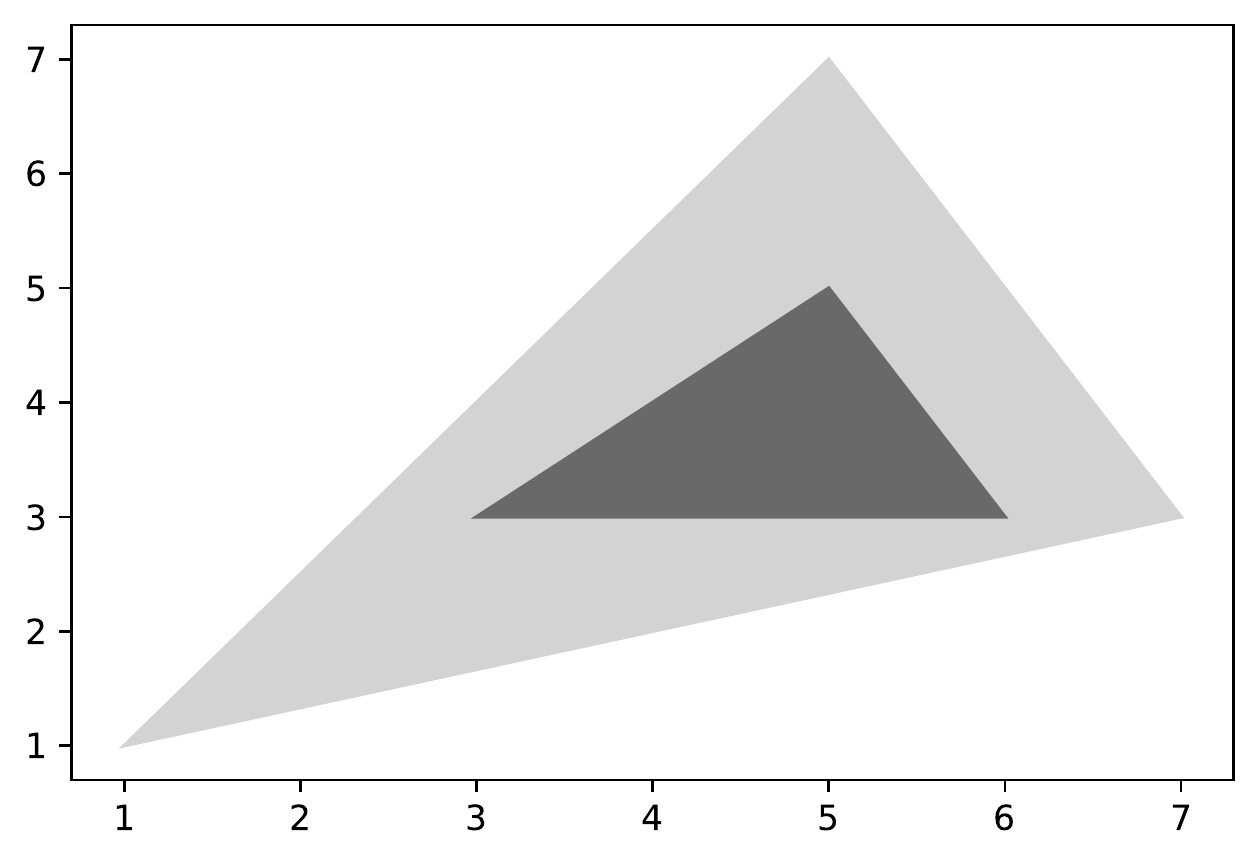}
    \includegraphics[scale=0.5]{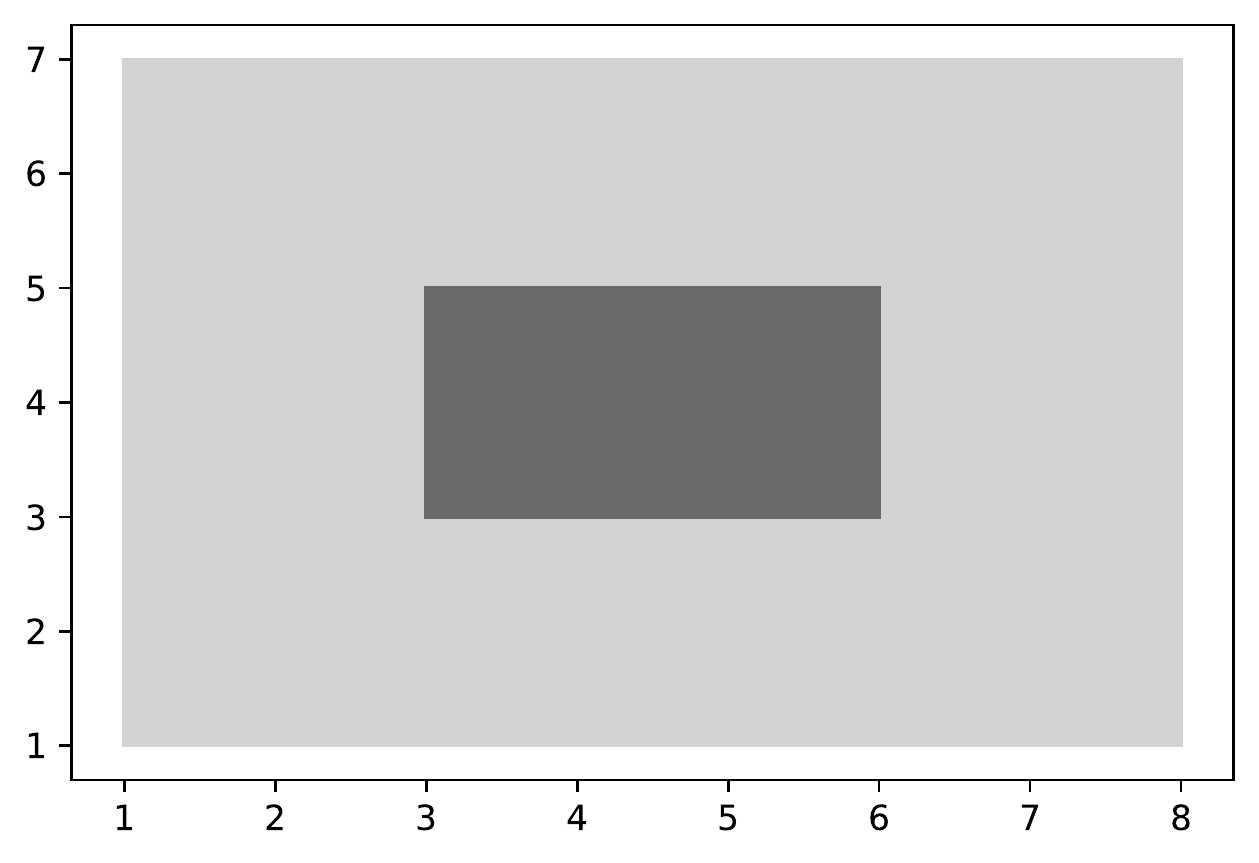}
    \caption{An illustrated comparison between exact (left) and approximate (right) Tukey depth. In both figures, the set of points is $\{(1, 1), (7, 3), (5, 7), (3, 3), (5, 5), (6, 3)\}$, the region of depth 0 is white, the region of depth 1 is light gray, and the region of depth 2 is dark gray. Note that for exact Tukey depth, the regions of different depths form a sequence of nested convex polygons; for approximate Tukey depth, they form a sequence of nested rectangles.}
    \label{fig:depth_example}
\end{figure}}

\section{Main Algorithm}
\label{sec:algorithm}
Our algorithm, $\alg$, consists of four steps:
\begin{enumerate}
    \item Randomly partition the dataset into $m$ subsets, non-privately compute the OLS estimator on each subset, and collect the $m$ estimators into set $\{\beta_i\}_{i=1}^m$.
    \item Compute the  volumes of regions of different approximate Tukey depths with respect to $\{\beta_i\}_{i=1}^m$.
    \item Run a propose-test-release (PTR) algorithm using these volumes. If it passes, set $B$ to be the region of $\mathbb{R}^d$ with approximate Tukey depth at least $\nicefrac{m}{4}$ in $\{\beta_i\}_{i=1}^m$ and proceed to the next step. If not, release $\bot$ (failure).
    \item If the previous step succeeds, apply the exponential mechanism, using approximate Tukey depth as the utility function, to privately select a point from $B$.
\end{enumerate}
A basic utility result for the version of $\alg$ using exact Tukey depth appears \arxiv{in the next subsection} \narxiv{below. \narxiv{The result is a direct application of work from \citet{BGSUZ21}, and the (short) proof appears in the Appendix's \cref{subsec:omitted_proofs}.}}. \arxiv{The remaining subsections elaborate on the details of our version using approximate Tukey depth, culminating in the full pseudocode in \cref{alg:main} and overall result, Theorem~\ref{thm:main_approx}.}

\arxiv{\subsection{Exact Utility Result}
\label{subsec:exact}
To build intuition, we first state a utility guarantee for the (inefficient) version of our algorithm that uses exact Tukey depth. The result is a direct application of a result from \citet{BGSUZ21} showing that a point of high Tukey depth with respect to samples from a Gaussian distribution $P$ approximates $\E{}{P}$ in Mahalanobis distance.}

\begin{theorem}[\cite{BGSUZ21}]
\label{thm:tukey-depth-convergence}
  Let $0<\alpha,\gamma<1$ and let $S = \{ \beta_1, ..., \beta_m\}$ be an i.i.d. sample from the multivariate normal distribution $\calN(\beta^*, \Sigma)$  with covariance $\Sigma \in \mathbb{R}^{d\times d}$ and mean $\E{}{\beta_i} = \beta^* \in \mathbb{R}^d$ . Given $\hat{\beta}\in \mathbb{R}^d$ with Tukey depth at least $p$ with respect to $S$, there exists a constant $c>0$ such that when $m\geq c \left(\frac{d+\log (1/\gamma)}{\alpha^2} \right)$ with probability $1-\gamma$,  $\|\hat{\beta}-\beta^*\|_{\Sigma}\leq \Phi^{-1}(1-p/m+\alpha)$, where $\Phi$ denotes the CDF of  of the standard univariate Gaussian.
\end{theorem}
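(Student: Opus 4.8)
The plan is to combine the affine invariance of Tukey depth with a uniform-convergence (VC) argument, which is exactly the route taken in \citet{BGSUZ21}. First I would reduce to the standard Gaussian. Since Tukey depth is invariant under invertible affine transformations, apply the map $T(x) = \Sigma^{-1/2}(x-\beta^*)$: it pushes $\calN(\beta^*,\Sigma)$ forward to $\calN(0,I_d)$, sends $S$ to an i.i.d.\ sample from $\calN(0,I_d)$, preserves the Tukey depth of every point (so $T(\hat\beta)$ still has depth at least $p$ with respect to the transformed sample), and satisfies $\|T(\hat\beta)\|_2 = \|\hat\beta-\beta^*\|_\Sigma$. Hence it suffices to prove that if $\hat y := T(\hat\beta)$ has empirical Tukey depth at least $p$ with respect to an i.i.d.\ $\calN(0,I_d)$ sample of size $m$, then $\|\hat y\|_2 \le \Phi^{-1}(1 - p/m + \alpha)$ with probability at least $1-\gamma$.

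Second I would compute the \emph{population} Tukey depth of an arbitrary $y\in\mathbb{R}^d$ under $P = \calN(0,I_d)$. For a unit vector $v$, the halfspace $\{z : \langle v, z-y\rangle \ge 0\}$ has mass $\Pr_{X\sim P}[\langle v,X\rangle \ge \langle v,y\rangle] = 1-\Phi(\langle v,y\rangle)$, since $\langle v,X\rangle\sim\calN(0,1)$. Minimizing over $v$ (the worst direction is $v = y/\|y\|_2$) shows that the population Tukey depth of $y$ equals $1-\Phi(\|y\|_2)$.

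Third, the key step is a uniform deviation bound over halfspaces. The class of closed halfspaces in $\mathbb{R}^d$ has VC dimension $d+1$, so the standard VC inequality furnishes a constant $c>0$ such that, whenever $m \ge c\,(d+\log(1/\gamma))/\alpha^2$, with probability at least $1-\gamma$ the empirical mass $\hat P_m(h)$ and the true mass $P(h)$ differ by at most $\alpha$ \emph{simultaneously for all} halfspaces $h$. On this good event, $\hat y$ having empirical depth at least $p$ means $\hat P_m(h) \ge p/m$ for every halfspace $h$ whose bounding hyperplane passes through $\hat y$, hence $P(h) \ge p/m - \alpha$ for every such $h$, so the population depth of $\hat y$ is at least $p/m-\alpha$. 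By the second step this gives $1-\Phi(\|\hat y\|_2) \ge p/m-\alpha$, i.e.\ $\Phi(\|\hat y\|_2) \le 1 - p/m + \alpha$. Since $p\le m/2$, the right-hand side is at least $1/2+\alpha>0$ (and if it is $\ge 1$ the claimed bound is vacuous with $\Phi^{-1}(1)=+\infty$), so applying the increasing function $\Phi^{-1}$ yields $\|\hat y\|_2 \le \Phi^{-1}(1-p/m+\alpha)$; undoing $T$ gives $\|\hat\beta-\beta^*\|_\Sigma \le \Phi^{-1}(1-p/m+\alpha)$, as desired.

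I expect the only real subtlety to be the VC step: invoking the uniform bound with the correct dependence on $d$, $\alpha$, and $\gamma$ (noting that only the one-sided direction $P(h)\ge \hat P_m(h)-\alpha$ is actually needed), and emphasizing that it holds for \emph{all} halfspaces at once, which is what lets it apply to the data-dependent point $\hat\beta$ and to the infinitely many halfspaces through it. The affine-invariance reduction and the Gaussian depth computation are routine.
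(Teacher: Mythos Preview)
Your proposal is correct and follows essentially the same route as the paper's proof. The paper simply cites two results from \citet{BGSUZ21} as black boxes---their Proposition~3.3, which says the population Tukey depth under $\calN(\mu,\Sigma)$ equals $\Phi(-\|y-\mu\|_\Sigma)=1-\Phi(\|y-\mu\|_\Sigma)$, and their Lemma~3.5, the VC uniform-convergence bound over halfspaces---and then does the same algebra you do; you have unpacked both ingredients (via affine invariance for the first, and the explicit VC argument for the second), but the underlying strategy is identical.
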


\arxiv{\begin{proof}
This is a direct application of the results of~\citet{BGSUZ21}. They analyzed a notion of probabilistic (and normalized) Tukey depth over samples from a distribution:  $T_{\calN(\mu,\Sigma)}(y) := \min_{v}\P{X \sim \calN(\mu, \Sigma)}{\langle X, v \rangle \geq \langle y,v\rangle}$. Their Proposition 3.3 shows that $T_{\calN(\mu,\Sigma)}(y)$ can be characterized in terms of $\Phi$, the CDF of the standard one-dimensional Gaussian distribution. Specifically, they show $T_{\calN(\mu,\Sigma)}(y) = \Phi(-\|y-\mu \|_{\Sigma})$. From their Lemma 3.5, if $m\geq c \left(\frac{d+\log (1/\gamma)}{\alpha^2} \right)$, then with probability 
$1-\gamma$, $|p/m - T_{\calN(\beta^*,\Sigma)}(\hat{\beta})| \leq \alpha$. Thus
\begin{align*}
    -\alpha \leq&\ T_{\calN(\beta^*, \Sigma)}(\hat \beta) - p/m \\
    p/m - \alpha \leq&\ \Phi(-\|\hat \beta - \beta^*\|_\Sigma) \\
    p/m - \alpha \leq&\ 1 - \Phi(\|\hat \beta - \beta^*\|_\Sigma) \\
    \Phi(\|\hat \beta - \beta^*\|_\Sigma) \leq&\ 1 - p/m + \alpha \\
    \|\hat \beta - \beta^*\|_\Sigma \leq&\ \Phi^{-1}(1 - p/m + \alpha)
\end{align*}
where the third inequality used the symmetry of $\Phi$.
\end{proof}}

In practice, we observe that empirical distributions of models for real data often feature Gaussian-like concentration, fast tail decay, and symmetry.
\arxiv{\cref{fig:hist-housing} displays histograms for the coefficients of models learned by $\alg$ on one of our experiment datasets. Similar plots for the other datasets appear in \cref{subsec:model_distributions}.}
\narxiv{Plots of histograms for the the models learned by $\alg$ on experiment datasets appear in the Appendix's \cref{subsec:model_distributions}.}
Nonetheless, we emphasize that \cref{thm:tukey-depth-convergence} is a statement of sufficiency, not necessity. $\alg$ does not require any distributional assumption to be private, nor does non-Gaussianity preclude accurate estimation.   

\narxiv{The remaining subsections elaborate on the details of our version using approximate Tukey depth, culminating in the full pseudocode in \cref{alg:main} and overall result, Theorem~\ref{thm:main_approx}.}

\arxiv{\begin{figure}
    \centering
    \includegraphics[width = \textwidth]{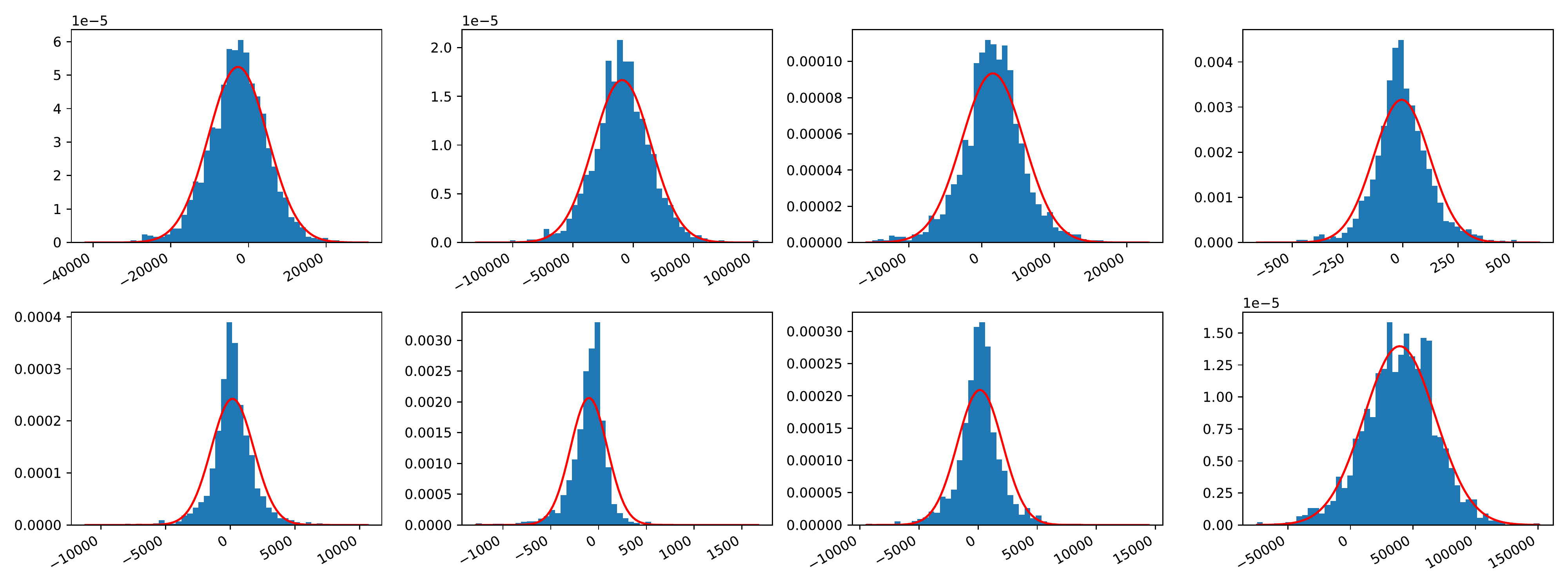}
    \caption{Histograms of models on the California dataset (see \cref{subsec:datasets}). Each plot corresponds to one of the eight regression coefficients. The red curve plots a Gaussian distribution whose mean and standard deviation match that of the underlying data.}
    \label{fig:hist-housing}
\end{figure}}

\subsection{Computing Volumes}
\label{subsec:volumes}
We start by describing how to compute volumes corresponding to different Tukey depths. As shown in the next subsection, these volumes will be necessary for the PTR subroutine.

\begin{definition}
\label{def:Vs}
    Given database $D$, define $V_{i, D} = \vol(\{y \mid y \in \mathbb{R}^d \text{ and } \tilde T_D(y) \geq i\})$, the volume of the region of points in $\mathbb{R}^d$ with approximate Tukey depth at least $i$ in $D$. When $D$ is clear from context, we write $V_i$ for brevity.
\end{definition}

Since our notion of approximate Tukey depth uses the canonical basis (Definition~\ref{def:tukey_approx}), it follows that $V_1, V_2, \ldots, V_{m/2}$\footnote{We assume $m$ is even for simplicity. The algorithm and its guarantees are essentially the same when $m$ is odd, and our implementation handles both cases.} form a sequence of nested (hyper)rectangles, as shown in \cref{fig:depth_example}. With this observation, computing a given $V_i$ is simple. For each axis, project the non-private models $\{\beta_i\}_{i=1}^m$ onto the axis and compute the distance between the two points of exact Tukey depth $i$ (from the ``left'' and ``right'') in the one-dimensional sorted array. This yields one side length for the hyperrectangle. Repeating this $d$ times in total and taking the product then yields the total volume of the hyperrectangle, as formalized next. \narxiv{The simple proof appears in the Appendix's \cref{subsec:omitted_proofs}.}
\begin{lemma}
\label{lem:compute_volumes}
    Lines~\ref{algln:compute_S} to~\ref{algln:volumes_end} of Algorithm~\ref{alg:main} compute $\{V_i\}_{i=1}^{m/2}$ in time $O(dm\log(m))$.
\end{lemma}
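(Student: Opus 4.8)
The plan is to prove the lemma in two parts: a \emph{correctness} part, showing that the quantity $V_i$ of \cref{def:Vs} equals a product of $d$ one-dimensional gaps between order statistics and that this is exactly what the cited lines compute; and a \emph{running-time} part, showing that these $d$ gaps-per-depth can all be extracted after $d$ sorts plus $O(dm)$ bookkeeping.

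For correctness, I would start by unpacking \cref{def:tukey_approx}. For a point $y$ and axis $j$, the two axis-aligned halfspaces through $y$ in directions $\pm e_j$ are $\{z : z_j \ge y_j\}$ and $\{z : z_j \le y_j\}$, so the approximate Tukey depth of $y$ with respect to $\{\beta_i\}_{i=1}^m$ is $\min_{j \in [d]} \min\bigl\{\#\{l : \beta_{l,j} \ge y_j\},\ \#\{l : \beta_{l,j} \le y_j\}\bigr\}$. Hence $\tilde T(y) \ge i$ if and only if, for \emph{every} axis $j$, both $\#\{l : \beta_{l,j} \ge y_j\} \ge i$ and $\#\{l : \beta_{l,j} \le y_j\} \ge i$. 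Writing the sorted $j$-th coordinates as $\beta_{(1),j} \le \cdots \le \beta_{(m),j}$, the first condition is $y_j \le \beta_{(m-i+1),j}$ and the second is $y_j \ge \beta_{(i),j}$ — and $\beta_{(i),j}$, $\beta_{(m-i+1),j}$ are precisely the ``left'' and ``right'' points of one-dimensional exact Tukey depth $i$ among the projections, as in the lemma's preamble. Therefore the depth-$\ge i$ region is the axis-aligned box $\prod_{j=1}^d [\beta_{(i),j}, \beta_{(m-i+1),j}]$ (ties change only a measure-zero boundary, and for $i \le m/2$ not even that), so
\[
    V_i = \prod_{j=1}^d \bigl(\beta_{(m-i+1),j} - \beta_{(i),j}\bigr).
\]
I would then match this to Algorithm~\ref{alg:main}: Lines~\ref{algln:compute_S} onward project the models onto each axis and sort, and the remaining lines up to~\ref{algln:volumes_end} read off $\beta_{(i),j}$ and $\beta_{(m-i+1),j}$ for each $i$ and $j$, form the side lengths $\beta_{(m-i+1),j} - \beta_{(i),j}$, and multiply across $j$ to obtain each $V_i$.

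For the running time, sorting the $d$ coordinate-projection arrays, each of length $m$, costs $O(dm\log m)$. Given the sorted arrays, obtaining $\beta_{(i),j}$ and $\beta_{(m-i+1),j}$ and their difference is $O(1)$ per pair $(i,j)$, hence $O(dm)$ in total over all $i \le m/2$ and $j \in [d]$; and forming each product $V_i = \prod_{j} (\cdot)$ is $O(d)$, hence $O(dm)$ over all $i$. The total is $O(dm\log m) + O(dm) = O(dm\log m)$.

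The only genuinely delicate point is the coordinate-wise factorization of the depth-$\ge i$ region and the attendant tie/boundary bookkeeping; once it is clear that approximate Tukey depth is a minimum of $2d$ one-dimensional halfspace counts, the box structure, the product formula for $V_i$, and the running-time bound are all routine. I therefore expect most of the write-up to be spent making the ``region equals box'' claim precise and tying it line-by-line to Algorithm~\ref{alg:main}.
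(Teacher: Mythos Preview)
Your proposal is correct and follows essentially the same approach as the paper: you unpack the approximate Tukey depth to show the depth-$\geq i$ region is the axis-aligned box $\prod_j [S_{j,i}, S_{j,m-(i-1)}]$, derive the product formula for $V_i$, and then account for the $d$ sorts plus $O(dm)$ product computations. The paper's proof is terser and omits the tie/boundary discussion, but the argument is the same.
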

\arxiv{\begin{proof}
     By the definition of approximate Tukey depth, for arbitrary $y = (y_1, \ldots, y_d)$ of Tukey depth at least $i$, each of the $2d$ halfspaces $h_{y_1 \cdot e_1}, h_{y_1 \cdot -e_1}, \ldots, h_{y_d \cdot e_d}, h_{y_d \cdot -e_d}$ contains at least $i$ points from $D$, where $x \cdot y$ denotes multiplication of a scalar and vector. Fix some dimension $j \in [d]$.  Since $\min(|h_{y_j \cdot e_j} \cap D|, |h_{y_j \cdot -e_j} \cap D|) \geq i$, $y_j \in [S_{j, i}, S_{j, m-(i-1)}]$. Thus $V_{i, D} = \prod_{j=1}^d (S_{j, m-(i-1)} - S_{j, i})$. The computation of $S$ starting in Line~\ref{algln:compute_S} sorts $d$ arrays of length $m$ and so takes time $O(dm\log(m))$. Line~\ref{algln:volumes_start} iterates over $m/2$ depths and computes $d$ quantities, each in constant time, so its total time is $O(dm)$.
\end{proof}}

\subsection{Applying Propose-Test-Release}
\label{subsec:ptr}
The next step of $\alg$~employs PTR to restrict the output region eventually used by the exponential mechanism. We collect this process into a subroutine $\ptr{}$.

The overall strategy applies work done by~\citet{BGSUZ21}. Their algorithm privately checks if the given database has a large Hamming distance to any ``unsafe'' database and then, if this PTR check passes, runs an exponential mechanism restricted to a domain of high Tukey depth. Since a ``safe'' database is defined as one where the restricted exponential mechanism has a similar output distribution on any neighboring database, the overall algorithm is DP. As part of their utility analysis, they prove a lemma translating a volume condition on regions of different Tukey depths to a lower bound on the Hamming distance to an unsafe database (Lemma 3.8 \citep{BGSUZ21}). This enables them to argue that the PTR check typically passes if it receives enough Gaussian data, and the utility guarantee follows. However, their algorithm requires computing both exact Tukey depths of the samples and the current database's exact Hamming distance to unsafety. The given runtimes for both computations are exponential in the dimension $d$ (see their Section C.2~\citep{BGSUZ21}).

We rely on approximate Tukey depth (Definition~\ref{def:tukey_approx}) to resolve both issues. First, as the previous section demonstrated, computing the approximate Tukey depths of a collection of $m$ $d$-dimensional points only takes time $O(dm\log(m))$. Second, we adapt their lower bound to give a \emph{1-sensitive} lower bound on the Hamming distance between the current database and any unsafe database. This yields an efficient replacement for the exact Hamming distance calculation used by \citet{BGSUZ21}. 

The overall structure of $\ptr{}$ is therefore as follows: use the volume condition to compute a 1-sensitive lower bound on the given database's distance to unsafety; add noise to the lower bound and compare it to a threshold calibrated so that an unsafe dataset has probability $\leq \delta$ of passing; and if the check passes, run the exponential mechanism to pick a point of high approximate Tukey depth from the domain of points with moderately high approximate Tukey depth. Before proceeding to the details of the algorithm, we first define a few necessary terms.
\begin{definition}[Definition 2.1 \cite{BGSUZ21}]
    Two distributions $\calP, \calQ$ over domain $\calW$ are \emph{$(\eps, \delta)$-indistinguishable}, denoted $\calP \approx_{\eps, \delta} \calQ$, if for any measurable subset $W \subset \calW$,
    \[
        \P{w \sim \calP}{w \in W} \leq e^\eps\P{w \sim \calQ}{w \in W} + \delta \text{ and } \P{w \sim \calQ}{w \in W} \leq e^\eps\P{w \sim \calP}{w \in W} + \delta.
    \]
\end{definition}
Note that $(\eps,\delta)$-DP is equivalent to $(\eps, \delta)$-indistinguishability between output distributions on arbitrary neighboring databases. \arxiv{Indistinguishability will define collections of ``safe'' and ``unsafe'' databases.} Given database $D$, let $A$ denote the exponential mechanism with utility function $\tilde T_D$ (see Definition~\ref{def:tukey_approx}). Given nonnegative integer $t$, let $A_t$ denote the same mechanism that assigns score $-\infty$ to any point with score $< t$, i.e., only samples from points of score $\geq t$. We will say a database is ``safe'' if $A_t$ is indistinguishable between neighbors.
\begin{definition}[Definition 3.1~\cite{BGSUZ21}]
\label{def:safety}
    Database $D$ is $(\eps, \delta, t)$-safe if for all neighboring $D' \sim D$, we have $A_t(D) \approx_{\eps, \delta} A_t(D')$. Let $\safe{\eps, \delta, t}$ be the set of safe databases, and let $\unsafe{\eps, \delta, t}$ be its complement.
\end{definition}

\arxiv{We can now state the main lemma from~\citet{BGSUZ21}, which translates a relationship between the volumes of regions of different approximate Tukey depths into a lower bound on the Hamming distance between the given database and any unsafe one. 
\begin{lemma}[Lemma 3.8~\citep{BGSUZ21}]
\label{lem:ptr}
    For any $k \geq 0$, if there exists a $g > 0$ such that $\frac{V_{t-k-1, D}}{V_{t+k+g+1, D}} \cdot e^{-\eps g/ 2} \leq \delta$, then for every database $z$ in $\unsafe{\eps, 4e^\eps\delta, t}$, $d_H(D, z) > k$, where $d_H$ denotes Hamming distance.
\end{lemma}
Simplifying slightly, the lemma says that if $V_{t-k, D}$ is not substantially larger than $V_{t+k+g, D}$, then making $D$ unsafe requires changing more than $k$ points. This relies on the usage of an exponential mechanism whose utility function has sensitivity 1. 

One more step is necessary to complete $\ptr$. Lemma~\ref{lem:ptr} provides a way to compute a lower bound on the distance between the current database and any unsafe database, but it does not guarantee that this lower bound is 1-sensitive. We use this result to construct a 1-sensitive lower bound in Lemma~\ref{lem:ptr_2}.}

\narxiv{We now state the main result of this section, \cref{lem:ptr_2}. Briefly, it modifies Lemma 3.8 from \citet{BGSUZ21} to construct a 1-sensitive lower bound on distance to unsafety.}

\begin{lemma}
\label{lem:ptr_2}
    Define $M(D)$ to be a mechanism that receives as input database $D$ and computes the largest $k \in \{0, \ldots, t-1\}$ such that there exists $g > 0$ where, for volumes $V$ defined using a monotonic utility function,
    \[
        \frac{V_{t-k-1, D}}{V_{t+k+g+1,D}} \cdot e^{-\eps g/2} \leq \delta
    \]
    or outputs $-1$ if the inequality does not hold for any such $k$. Then for arbitrary $D$
    \begin{enumerate}
        \item $M$ is 1-sensitive, and
        \item for all $z \in \unsafe{\eps, 4e^\eps\delta, t}$, $d_H(D, z) > M(D)$.
    \end{enumerate}
\end{lemma}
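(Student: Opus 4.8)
The two claims have very different difficulty, so I would dispatch Claim~2 immediately and then spend all the effort on the $1$-sensitivity claim. For Claim~2: if $M(D)=k\geq 0$, then by construction there is a slack value $g>0$ witnessing $\frac{V_{t-k-1,D}}{V_{t+k+g+1,D}}\,e^{-\eps g/2}\leq \delta$, so Lemma~3.8 of \citet{BGSUZ21}, instantiated with this $k$ and this $g$, gives $d_H(D,z)>k=M(D)$ for every $z\in\unsafe{\eps,4e^\eps\delta,t}$; and if $M(D)=-1$ then $d_H(D,z)\geq 0>-1$ trivially. So everything reduces to proving $M$ is $1$-sensitive.

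The engine for Claim~1 is a ``volume sandwich.'' Fix neighbors with $D'=D\cup\{x\}$. Since the utility function (approximate Tukey depth) is monotonic and $1$-sensitive, $\tilde T_D(y)\leq \tilde T_{D'}(y)\leq \tilde T_D(y)+1$ for every $y$; hence $\{y:\tilde T_{D'}(y)\geq i\}$ both contains $\{y:\tilde T_D(y)\geq i\}$ and is contained in $\{y:\tilde T_D(y)\geq i-1\}$, and passing to volumes yields
\[
    V_{i,D}\;\leq\;V_{i,D'}\;\leq\;V_{i-1,D}\qquad\text{for all } i\geq 1.
\]
I would also record the two monotonicities used repeatedly below: $V_{1,\cdot}\geq V_{2,\cdot}\geq\cdots$ (regions of higher depth nest inside regions of lower depth), and $g\mapsto e^{-\eps g/2}$ is decreasing because $\eps>0$.

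Now for $|M(D)-M(D')|\leq 1$: since ``neighbor'' is symmetric it suffices to prove $M(D')\geq M(D)-1$ for every neighbor $D'$. If $M(D)\in\{-1,0\}$ this is automatic because $M$ never returns less than $-1$. Otherwise let $k:=M(D)$ with $1\leq k\leq t-1$ and let $g>0$ be a witness for $D$; I claim $(k-1,g')$ is then a witness for $D'$ (note $k-1\in\{0,\dots,t-1\}$), with $g':=g+1$ when $D'=D\cup\{x\}$ and $g':=g$ when $D=D'\cup\{x\}$. Unwinding the definition, being a $(k-1,g')$-witness for $D'$ means $\frac{V_{t-k,D'}}{V_{t+k+g',D'}}\,e^{-\eps g'/2}\leq\delta$. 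In the case $D'=D\cup\{x\}$, the sandwich gives $V_{t-k,D'}\leq V_{t-k-1,D}$ and $V_{t+k+g',D'}=V_{t+k+g+1,D'}\geq V_{t+k+g+1,D}$, so
\[
    \frac{V_{t-k,D'}}{V_{t+k+g',D'}}\,e^{-\eps g'/2}\;\leq\;\frac{V_{t-k-1,D}}{V_{t+k+g+1,D}}\,e^{-\eps(g+1)/2}\;\leq\;\frac{V_{t-k-1,D}}{V_{t+k+g+1,D}}\,e^{-\eps g/2}\;\leq\;\delta .
\]
The case $D=D'\cup\{x\}$ is analogous, now using $V_{t-k,D'}\leq V_{t-k,D}\leq V_{t-k-1,D}$ and $V_{t+k+g,D'}\geq V_{t+k+g+1,D}$ with $g'=g$. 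Either way $(k-1,g')$ witnesses $M(D')\geq k-1=M(D)-1$, which completes Claim~1.

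I expect the main obstacle to be purely bookkeeping: aligning the index shifts in the numerator ($t-k-1$ vs.\ $t-k$), the denominator ($t+k+g+1$), and the slack $g$ so that a witness for one neighbor transfers to the other in both the add and the remove direction. Two edge cases also need a careful word. First, when $k=t-1$ the numerator index $t-k-1$ equals $0$, and $V_{0,\cdot}$ is the (infinite) volume of all of $\mathbb{R}^d$; this in fact makes $k=t-1$ never a witness, so one may safely assume $M(D)\leq t-2$ and the shifted index $k-1$ always lands in the well-behaved range. Second, the ratios should be read with the convention that a zero denominator (with positive numerator) forces the inequality to fail; one then checks the sandwich still cooperates, since if the witnessing ratio for $D$ is genuinely $\leq\delta$ and some denominator is $0$, the corresponding numerator is also $0$, and the sandwich forces the matching $D'$-numerator to $0$ as well.
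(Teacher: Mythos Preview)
Your proposal is correct and takes essentially the same route as the paper: establish the volume sandwich $V_{i,D}\leq V_{i,D'}\leq V_{i-1,D}$ for $D'=D\cup\{x\}$, then transfer a witness $(k,g)$ for one neighbor to a witness $(k-1,g')$ for the other, with the $-1$ output handling the base case and Claim~2 read off directly from Lemma~3.8 of \citet{BGSUZ21}. The only cosmetic difference is that the paper keeps $g'=g$ in both directions (which the sandwich already supports), whereas you take $g'=g+1$ in the add case; your choice also works but is not needed. Your explicit treatment of the $k=t-1$ and zero-volume edge cases is a useful addition the paper leaves implicit.
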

\arxiv{\begin{proof}
    We first prove item 1. Let $k^*_D$ denote the mechanism's output on database $D$. Let $D'$ be a neighbor of $D$, $D = D_0 \cup \{x\}$ and $D' = D_0 \cup \{x'\}$. We want to show $|k^*_D - k^*_{D'}| \leq 1$.
    
    \underline{Case 1}: $k^*_{D'} \geq 0$, with associated $g^*_{D'} \geq 1$. By the definition of $k^*_{D'}$, setting $d' = k^*_{D'} +1$,
\begin{equation}
\label{eq:V_D}
    \frac{V_{t-d', D'}}{V_{t+d'+g^*_{D'}, D'}} \leq \delta \cdot e^{\eps g^*_{D'}/2}.
\end{equation}
We start with the numerator. Two subcases arise.
\begin{enumerate}
    \item If $x'$ has approximate Tukey depth in $D_0$ at most $t-d'-1$, the monotonicity of our utility function means that $x'$ does not decrease the size of all regions of depth larger than $t-d'-1$ from $D_0$ to $D'$. In particular, $ V_{t-d', D'} \geq V_{t-d', D_0}$.
    \item If $x'$ has depth in $D_0$ at least $t-d'$, then $V_{t-d', D'} = V_{t-d', D_0}$.
\end{enumerate}
Thus $V_{t-d', D'} \geq V_{t-d', D_0}$. $D$ adds a point to $D_0$, so $V_{t-d', D_0} \geq V_{t-d'+1, D} $, and in turn $V_{t-d', D'} \geq V_{t-d'+1, D}$.

Nearly identical logic applies to the denominator. $D'$ adds a point to $D_0$, so $V_{t+d'+g^*_{D'}, D'} \leq V_{t+(d'-1)+g^*_{D'}, D_0}$. Two subcases again arise.
\begin{enumerate}
    \item If $x$ has depth in $D_0$ at most $t+(d'-2)+g^*_{D'}$, the monotonicity of our utility function means that $x$ increases the size of all regions of depth larger than $t+(d'-2)+g^*_{D'}$ from $D_0$ to $D$. In particular, $V_{t+(d'-1)+g^*_{D'}, D_0} \leq V_{t+(d'-1)+g^*_{D'}, D}$.
    \item If $x$ has depth in $D_0$ at least $t+(d'-1)+g^*_{D'}$, then $V_{t+(d'-1)+g^*_{D'}, D_0} = V_{t+(d'-1)+g^*_{D'}, D}$.
\end{enumerate}
Thus $V_{t+d'+g^*_{D'}, D'} \leq V_{t+(d'-1) + g^*_{D'}, D}$. Returning to Equation~\ref{eq:V_D}, we can substitute in $d' = k^*_{D'} +1$ to get
\[
    \frac{V_{t-(k^*_{D'}-1)-1, D}}{V_{t+(k^*_{D'}-1) + g^*_{D'} + 1, D}} \leq \delta \cdot e^{\eps g^*_{D'}/2}.
\]
It follows that $k^*_D \geq k^*_{D'}-1$.

\underline{Case 2}: $k^*_{D'} = -1$. Assume for contradiction that $k^*_D \geq 1$, with associated $g^*_D$. Then we can apply the argument from Case 1 to obtain $k^*_{D'} \geq k^*_{D}-1$, a contradiction. Thus $k^*_D \leq 0$.

Since we're in the swap model, the same logic applies in both cases with $D$ and $D'$ reversed. This concludes the proof that $|k^*_D - k^*_{D'}| \leq 1$, and in turn that $M$ is 1-sensitive.

We now prove item 2. This holds for $k \geq 0$ by Lemma~\ref{lem:ptr}; for $k = -1$, the lower bound on Hamming distance to unsafety is the trivial one, $d_H(D, z) \geq 0$.
\end{proof}}

\narxiv{The proof of \cref{lem:ptr_2} appears in the Appendix's \cref{subsec:omitted_proofs}.} Our implementation of the algorithm described by \cref{lem:ptr_2} randomly perturbs the models with a small amount of noise to avoid having regions with 0 volume. We note that this does not affect the overall privacy guarantee.

$\ptr$ therefore runs the mechanism defined by Lemma~\ref{lem:ptr_2}, add Laplace noise \arxiv{scaled to sensitivity 1} to the result, and proceeds to the restricted exponential mechanism if the noisy statistic crosses a threshold.\arxiv{ When $\ptr$ is called with parameter $\delta$, the threshold is set so that the probability of an unsafe dataset producing a sufficiently large noisy statistic is upper bounded by $\delta$.} Pseudocode appears in Algorithm~\ref{alg:ptr}, and we now state its guarantee \narxiv{as proved in the Appendix's \cref{subsec:omitted_proofs}}.

\begin{lemma}
\label{lem:ptr_dp}
    Given the depth volumes $V$ computed in Lines~\ref{algln:volumes_start} to~\ref{algln:volumes_end} of Algorithm~\ref{alg:main}, $\ptr(V, \eps, \delta)$ is $\eps$-DP and takes time $O(m\log(m))$.
\end{lemma}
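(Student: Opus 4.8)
The plan is to establish the privacy and running-time claims separately, both essentially by unpacking what $\ptr(V,\eps,\delta)$ does and invoking \cref{lem:ptr_2}. Recall from the discussion preceding \cref{alg:ptr} that, writing $t$ for the depth parameter ($t=m/4$, recoverable from $|V|$), the routine performs only three operations: (i) it runs the mechanism $M$ of \cref{lem:ptr_2}, which depends on the data only through the supplied volumes $V$, to obtain an integer $k^\ast\in\{-1,0,\dots,t-1\}$; (ii) it forms $\hat k = k^\ast + \lap{1/\eps}$; and (iii) it outputs the single bit $[\hat k \ge \tau]$ (``pass'' or ``fail''), where the threshold $\tau=\tau(\eps,\delta)$ does not depend on the input database. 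For privacy: by \cref{lem:ptr_2}(1), $k^\ast = M(D)$ is a $1$-sensitive function of $D$ (adding or removing a record changes at most one of the $m$ fitted models, exactly the kind of change \cref{lem:ptr_2} controls, and it moves $k^\ast$ by at most $1$). Hence step (ii) is the Laplace mechanism applied to a $1$-sensitive statistic with noise scale $1/\eps$, which is $\eps$-DP, and step (iii) is a fixed, data-independent post-processing of $\hat k$; by closure of DP under post-processing, $\ptr(V,\eps,\delta)$ is $\eps$-DP. The calibration of $\tau$ so that an unsafe database passes with probability at most $\delta$, together with \cref{lem:ptr_2}(2), is what produces the $\delta$ term once $\ptr$ is composed with the restricted exponential mechanism inside $\alg$; it is irrelevant to this $\eps$-DP claim.

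For the running time: the volumes $V=(V_1,\dots,V_{m/2})$ are given, so steps (ii)--(iii) cost $O(1)$ and the only real work is computing $k^\ast$. For a fixed candidate $k$ one must decide whether some $g\ge1$ satisfies $\frac{V_{t-k-1}}{V_{t+k+g+1}}e^{-\eps g/2}\le\delta$, equivalently whether $V_{t-k-1}\le\delta\cdot\max_{g\ge1}\!\big(V_{t+k+g+1}\,e^{\eps g/2}\big)$. Reindexing by $j=t+k+g+1$ rewrites the maximum as $e^{-\eps(t+k+1)/2}\cdot\max_{j\ge t+k+2}W_j$ with $W_j:=V_je^{\eps j/2}$, so after an $O(m)$ precomputation of the $W_j$ and of their suffix maxima $S_j:=\max_{j'\ge j}W_{j'}$ (carried out in log-scale to avoid overflow), each $k$ is tested in $O(1)$. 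A single scan over the $O(m)$ values of $k$, keeping the largest that passes, thus computes $k^\ast$ in $O(m)$ time, comfortably within the stated $O(m\log m)$.

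The main obstacle is mostly already dispatched: \cref{lem:ptr_2} does the heavy lifting by certifying $1$-sensitivity, after which the privacy argument is the textbook Laplace-plus-post-processing template, the only subtlety being to keep $\tau$ a function of $(\eps,\delta,m)$ alone and never of the data. On the complexity side, the one idea required is the reindexing/suffix-maximum reorganization that collapses the naive $O(m^2)$ double search over $(k,g)$ into a linear pass; the displayed equivalence for a fixed $k$ is a routine monotone rescaling of the volumes and is the single computation worth double-checking.
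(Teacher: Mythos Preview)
Your proof is correct. The privacy half is essentially identical to the paper's: both invoke the $1$-sensitivity of $k^\ast$ established in \cref{lem:ptr_2} and conclude that the Laplace-plus-threshold step is $\eps$-DP. Your added remarks about post-processing and about the threshold being data-independent are fine elaborations but not new ideas.

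The runtime half is where you diverge. The paper observes that the predicate ``some $g$ works for this $k$'' is monotone in $k$ (if a $(k,g)$ pair satisfies the inequality, then so does some $(k',g')$ for every $k'<k$), and therefore binary-searches over $k\in\{0,\dots,t-1\}$ while, for each candidate $k$, exhaustively scanning the $O(m)$ values of $g$; this gives the stated $O(m\log m)$. You instead reindex $g\mapsto j=t+k+g+1$, factor out $e^{-\eps(t+k+1)/2}$, and precompute the suffix maxima of $W_j=V_j e^{\eps j/2}$ so that each $k$ is tested in $O(1)$; a linear scan over $k$ then yields $k^\ast$ in $O(m)$ total. Your route is slightly sharper (it beats the paper's bound and does not need the monotonicity-in-$k$ observation), at the cost of a small algebraic reorganization and an extra $O(m)$ array; the paper's route is conceptually simpler and exactly matches the claimed bound. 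Both are valid.
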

\arxiv{\begin{proof}
    The privacy guarantee follows from the 1-sensitivity of computing $k$ (\cref{lem:ptr_2}). For the runtime guarantee, we perform a binary search over the distance lower bound $k$ starting from the maximum possible $\nicefrac{m}{4}$ and, for each $k$, an exhaustive search over $g$. Note that if some $k, g$ pair satisfies the inequality in Lemma~\ref{lem:ptr}, there exists some $g'$ for every $k' < k$ that satisfies it as well. Thus since both have range $\leq \nicefrac{m}{4}$, the total time is $O(m\log(m))$.
\end{proof}}

\begin{algorithm}
    \begin{algorithmic}[1]
       \STATE {\bfseries Input:} Tukey depth region volumes $V$, privacy parameters $\eps$ and $\delta$ \alglinelabel{algln:ptr_input}
       \STATE Use Lemma~\ref{lem:ptr_2} with $t = \frac{|V|}{2}$ and $\frac{\delta}{8e^\eps}$ to compute lower bound $k$ for distance to unsafe database
       \alglinelabel{algln:ptr_lower_bound}
       \IF{$k + \lap{1/\eps} \geq \frac{\log(1/2\delta)}{\eps}$}
       \alglinelabel{algln:ptr_check}
        \STATE Return True
       \ELSE
        \STATE Return False
       \ENDIF
    \end{algorithmic}
    \caption{$\ptr$}
    \label{alg:ptr}
\end{algorithm}

\subsection{Sampling}
\label{subsec:sample}
If $\ptr$ passes, $\alg$~then calls the exponential mechanism restricted to points of approximate Tukey depth at least $t = \nicefrac{m}{4}$, a subroutine denoted $\rtukeyem$ (Line~\ref{algln:rtukeyem} in Algorithm~\ref{alg:main}). Note that the passage of PTR ensures that with probability at least $1-\delta$, running $\rtukeyem$ is $(\eps, \delta)$-DP. We use a common two step process for sampling from an exponential mechanism over a continuous space: \narxiv{1) sample a depth using the exponential mechanism, then 2) return a uniform sample from the region corresponding to the sampled depth.}
\arxiv{\begin{enumerate}
    \item Sample a depth using the exponential mechanism.
    \item Return a uniform sample from the region corresponding to the sampled depth.
\end{enumerate}}

\subsubsection{Sampling a Depth}
\label{subsec:sample_depth}
We first define a slight modification $W$ of the volumes $V$ introduced earlier.
\begin{definition}
\label{def:Ws}
    Given database $D$, define $W_{i, D} = \vol(\{y \mid y \in \mathbb{R}^d \text{ and } \tilde T_D(y) = i\})$, the volume of the region of points in $\mathbb{R}^d$ with approximate Tukey depth exactly $i$ in $D$.
\end{definition}
To execute the first step of sampling, for $i \in \{m/4, m/4 + 1, \ldots, m/2\}$, $W_{i, D} = V_{i, D} - V_{i+1, D}$, so we can compute $\{W_{i, D}\}_{i=m/4}^{m/2}$ from the $V$ computed earlier in time $O(m)$. The restricted exponential mechanism then selects approximate Tukey depth $i \in \{m/4, m/4 + 1, \ldots, m/2\}$ with probability
\[
    \P{}{i} \propto W_{i, D} \cdot \exp(\eps \cdot i).
\]
Note that this expression drops the 2 in the standard exponential mechanism because approximate Tukey depth is monotonic; see Appendix \cref{subsec:monotone} for details. For numerical stability, racing sampling~\cite{MG21} can sample from this distribution using logarithmic quantities.

\subsubsection{Uniformly Sampling From a Region}
\label{subsec:sample_region}
Having sampled a depth $\hat i$, it remains to return a uniform random point of approximate Tukey depth $\hat i$. By construction, $W_{\hat{i}, D}$ is the volume of the set of points  $y = (y_1,...,y_d)$ such that the depth along every dimension $j$ is at least $\hat{i}$, and the depth along at least one dimension $j'$ is exactly $\hat{i}$. The result is straightforward when $d=1$: draw a uniform sample from the union of the two intervals of points of depth exactly $\hat i$ (depth from the ``left'' and ``right'').

For $d > 1$, the basic idea of the sampling process is to partition the overall volume into disjoint subsets, compute each subset volume, sample a subset according to its proportion in the whole volume, and then sample uniformly from that subset. Our partition will split the overall region of depth exactly $i$ according to the first dimension with dimension-specific depth exactly $i$. Since any point in the overall region has at least one such dimension, this produces a valid partition, and we will see that computing the volumes of these partitions is straightforward using the $S$ computed earlier. Finally, the last sampling step will be easy because the final subset will simply be a pair of (hyper)rectangles. \narxiv{Since space is constrained and the details are relatively straightforward from the sketch above, full specification and proofs for this process $\samplepoint(S, i)$ appear in \cref{subsec:sample_region_details}. For immediate purposes, it suffices to record the following guarantee:
\begin{lemma}
\label{lem:samplepoint}
    $\samplepoint(S, i)$ returns a uniform random sample from the region of points with approximate Tukey depth $i$ in $S$ in time $O(d)$.
\end{lemma}}
\arxiv{We start by formally defining our partition.

\begin{definition}
\label{def:partition}
Given $d$-dimensional database $D$ and dimension $j \in [d]$, for $y \in \mathbb{R}^d$, let $T_{D, j}(y)$ denote the exact (one-dimensional) Tukey depth of point $y$ with respect to dimension $j$ in database $D$. Let $B_i$ denote the region of points with approximate Tukey depth $i$. 
Define the partition $\{C_{j, i}\}_{j=1}^d$ of $B_i$ as the volume of points where depth $i$ occurs in dimension $j$ for the first time, i.e., 
    \[
        C_{j, i} := \{y \in \mathbb{R}^d \mid \min_{j' < j}  T_{D, j'}(y) > i \text{ and } T_{D, j}(y) = i \text{ and } \min_{j' > j} T_{D, j'}(y) \geq i\}.
    \]
\end{definition}

The partition is well defined because any point with approximate Tukey depth $i$ is in exactly one of the $C_{j, i}$ volumes. Each $C_{j,i}$ is also the Cartesian product of three sets: any $y \in C_{j,i}$ must have 1) depth strictly greater than $i$ in dimensions $1,...,j-1$, 2) depth $i$ in dimension $j$, and 3) depth at least $i$ in dimensions $j+1,...,d$. Being the Cartesian product of three sets, the total volume of $C_{j,i}$ can be computed as the product of the three corresponding volumes in lower dimensions. We will denote these by $V_{<j,i}, W_{j,i}, V_{> j, i}$, formalized below.

\begin{definition}
\label{def:volume_sampling}
    Given $d$-dimensional database $D$, dimension $j \in [d]$, and depth $i$, define 
    \begin{enumerate}
        \item $V_{j, i, D} = \vol(\{y_j \mid y \in \mathbb{R}^d, \tilde T_D(y) \geq i\})$, the total length in dimension $j$ of the region with approximate Tukey depth at least $i$.
        \item $W_{j, i, D} = \vol(\{y_j \mid y \in \mathbb{R}^d, T_{D, j}(y) = i \text{ and } \tilde T_D(y) \geq i\})$, the total length in dimension $j$ of the region with depth exactly $i$ in dimension $j$ and approximate Tukey depth at least $i$.
        \item $V_{<j, i, D} = \vol(\{y_{1:j-1} \mid y \in \mathbb{R}^d, \tilde T_D(y) \geq i\})$, the volume of the projection onto the first $j-1$ dimensions of points with approximate Tukey depth at least $i$. Define $V_{>j, i, D}$ analogously.
    \end{enumerate}
    When $D$ is clear from context, we drop it from the subscript.
\end{definition}

The next lemma shows how to compute these and other relevant volumes. We again note that we fix $m$ to be even for neatness. The odd case is similar.

\begin{lemma}
\label{lem:compute_uniform_volumes}
    Given matrix $S \in \mathbb{R}^{d \times m}$ of projected and sorted models, as in Line~\ref{algln:S} of Algorithm~\ref{alg:main}, 
    \begin{enumerate}
        \item $V_{j, i} = S_{j, m-(i-1)} - S_{j, i}$,
        \item $W_{j, i} = V_{j, i} - V_{j, i+1}$,
        \item $V_{<j, i} = \prod_{j'=1}^{j-1} V_{j, i} $ and $ V_{>j, i} = \prod_{j'=j+1}^{d} V_{j, i}$
    \end{enumerate}
\end{lemma}
\begin{proof}
    The proofs of the first item uses essentially the same reasoning as the proof of Lemma~\ref{lem:compute_volumes}. For the second item, any point contributing to $V_{j, i}$ but not $V_{j, i+1}$ has depth exactly $i$ in dimension $j$. For the third item, a point contributes to $V_{<j, i}$ if and only if it has depth at least $i$ in all $d$ dimensions; since the resulting region is a rectangle, its volume is the product of its side lengths.
\end{proof}
With Lemma~\ref{lem:compute_uniform_volumes}, we can now prove that $\samplepoint$ works as intended.
\begin{lemma}
\label{lem:samplepoint}
    $\samplepoint(S, i)$ returns a uniform random sample from the region of points with approximate Tukey depth $i$ in $S$ in time $O(d)$.
\end{lemma}
\begin{proof}
    Given $S \in \mathbb{R}^{d \times m}$, define $B_i$ and $\{C_{j,i} \}^d_{j=1}$ as in \cref{def:partition}. To show the outcome of $\samplepoint(S, i)$ is uniformly distributed over points with approximate Tukey depth $i$, it suffices to show the algorithm samples a $C_{j,i}$ with probability proportional to its volume. Recall that $C_{j, i}$ is the set of points with depth greater than $i$ in dimensions $1, 2, \ldots, j-1$, exactly $i$ in dimension $j$, and at least $i$ in the remaining dimensions. With this interpretation, $C_{j,i}$ is the Cartesian product of three lower dimensional regions, and thus its volume is the product of the corresponding volumes,  
    \[
        \vol(C_{j,i}) = V_{< j, i+1} \cdot W_{j, i} \cdot V_{> j, i} .
    \]
    These quantities, along with the normalizing constant $V_{\geq 1, i}$, can be computed using \cref{lem:compute_uniform_volumes}.
    
    Since $i$ is fixed, computing the full set of $V_{j, i}$ and $W_{j, i}$ takes time $O(d)$, and by tracking partial sums and using logarithms, we can compute the full set of $V_{<j, i}$ and $V_{>j, i}$ in time $O(d)$ as well. The last step is sampling the final point $y$, which takes time $O(d)$ using the previously computed $S$.
\end{proof}

\begin{algorithm}
    \begin{algorithmic}[1]
       \STATE {\bfseries Input:} Tukey depth region volumes $V$, sorted collection of estimators $S$, depth restriction $t$, privacy parameter $\eps$ \alglinelabel{algln:rtukeyem_input}
       \FOR{$i = t, t+1, \ldots, |V|-1$}
        \STATE Compute volume of region of Tukey depth exactly $i$, $W_i \gets V_i - V_{i+1}$
       \ENDFOR
       \STATE Sample depth $\hat i$ from distribution where $\P{}{i} \propto W_i \exp\left(\frac{\eps \cdot i}{2}\right)$
       \STATE Return $y \gets \samplepoint(S, \hat i)$
    \end{algorithmic}
    \caption{$\rtukeyem$}
    \label{alg:rtukeyem}
\end{algorithm}

\begin{algorithm}
    \begin{algorithmic}[1]
        \STATE {\bfseries Input:} Sorted collection of estimators $S$, depth $i$ \alglinelabel{algln:samplepoint_input}
        \STATE $d, m \gets $ number of rows and columns in $S$
        \STATE Compute $V_{\geq 1, i}$ using Lemma~\ref{lem:compute_uniform_volumes}
        \FOR{$j = 1, \ldots, d$}
            \STATE Compute $V_{<j,i+1}$, $W_{j,i}$, and $V_{> j, i}$ using Lemma~\ref{lem:compute_uniform_volumes}
            \STATE Compute $\vol(C_{j, i}) = V_{<j, i+1} \cdot W_{j, i} \cdot V_{>j, i}$
        \ENDFOR
        \STATE Sample index $j^* \in [d]$ with probability $\frac{\vol(C_{j,i})}{V_{\geq 1, i}}$ 
        \FOR{$j' = 1, \ldots, j^*-1$}
            \STATE $y_{j'} \gets$ uniform random sample from $[S_{j',i+1}, S_{j', m-i}]$
        \ENDFOR
        \STATE $y_{j^*} \gets$ uniform random sample from $[S_{j^*,i}, S_{j^*, i+1}) \cup (S_{j^*, m-i}, S_{j^*, m-(i-1)}]$
        \FOR{$j' = j^*+1, \ldots, d$}
            \STATE $y_{j'} \gets$ uniform random sample from $[S_{j',i}, S_{j', m-(i-1)}]$
        \ENDFOR
       \STATE Return $y$
    \end{algorithmic}
    \caption{$\samplepoint$}
    \label{alg:samplepoint}
\end{algorithm}}

\subsection{Overall Algorithm}
\label{subsec:overall}
We now have all of the necessary material for the main result, \cref{thm:main_approx}, restated below. The proof essentially collects the results so far into a concise summary.

\begin{algorithm}
    \begin{algorithmic}[1]
        \STATE {\bfseries Input:} Features matrix $X \in \mathbb{R}^{n \times d}$, label vector $y \in \mathbb{R}^n$, number of models $m$, privacy parameters $\eps$ and $\delta$ 
        \alglinelabel{algln:input}
        \STATE Evenly and randomly partition $X$ and $y$ into subsets $\{(X_i, y_i)\}_{i=1}^m$
        \alglinelabel{algln:partition}
        \FOR{$i = 1, \ldots, m$}
        \alglinelabel{algln:ols}
            \STATE Compute OLS estimator $\beta_i \gets (X_i^TX_i)^{-1}X_i^Ty_i$
        \ENDFOR
        \FOR{dimension $j \in [d]$}
        \alglinelabel{algln:compute_S}
            \STATE $\{\beta_{i, j}\}_{i=1}^m \gets$ projection of $\{\beta_i\}_{i=1}^m$ onto dimension $j$
            \STATE $(S_{j, 1}, \ldots, S_{j, m}) \gets \{\beta_{i, j}\}_{i=1}^m$ sorted in nondecreasing order 
        \ENDFOR
        \STATE Collect projected estimators into $S \in \mathbb{R}^{d \times m}$, where each row is nondecreasing
        \alglinelabel{algln:S}
        \FOR{$i \in [m/2]$}
        \alglinelabel{algln:volumes_start}
            \STATE Compute volume of region of depth $\geq i$, $V_i \gets \prod_{j=1}^d (S_{j, m-(i-1)} - S_{j, i})$
        \ENDFOR
        \alglinelabel{algln:volumes_end}
        \IF{$\ptr(V, \eps / 2, \delta)$}
        \alglinelabel{algln:ptr}
            \STATE $\hat \beta \gets \rtukeyem(V, S, m/4, \eps/2)$ \alglinelabel{algln:rtukeyem}
            \STATE Return $\hat \beta$
        \ELSE
            \STATE Return $\bot$
        \ENDIF
    \end{algorithmic}
    \caption{$\alg$}
    \label{alg:main}
\end{algorithm}

\begin{theorem}
    $\alg$, given in \cref{alg:main}, is $(\eps, \delta)$-DP and takes time $O\left(d^2n + dm\log(m)\right)$.
\end{theorem}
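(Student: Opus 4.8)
The plan is to obtain the theorem by assembling the lemmas established above; the statement is essentially a summary, so the proof is a line-by-line accounting of Algorithm~\ref{alg:main} together with a short propose-test-release composition argument.

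For the running time, I would step through Algorithm~\ref{alg:main} in order. The partition in Line~\ref{algln:partition} costs $O(n)$ and uses only data-independent randomness. The OLS loop beginning at Line~\ref{algln:ols} computes, for each of the $m$ subsets of $n/m$ rows, the Gram matrix $X_i^T X_i$ in time $O((n/m)d^2)$, its $d\times d$ inverse in time $O(d^3)$, and the remaining matrix--vector products in time $O((n/m)d + d^2)$; summing over the $m$ subsets gives $O(d^2 n + m d^3)$. Since $(X_i^T X_i)^{-1}$ is only defined when each subset has at least $d$ rows, the algorithm implicitly needs $n/m \ge d$, i.e.\ $m \le n/d$, so $m d^3 \le n d^2$ and the OLS step is $O(d^2 n)$. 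Lines~\ref{algln:compute_S}--\ref{algln:volumes_end} compute $S$ and the volumes $V$ in time $O(dm\log m)$ by Lemma~\ref{lem:compute_volumes}; the call to $\ptr$ costs $O(m\log m)$ by Lemma~\ref{lem:ptr_dp}; and the call to $\rtukeyem$ costs $O(m)$ to form the $W_i$ and sample a depth, plus $O(d)$ for $\samplepoint$ by Lemma~\ref{lem:samplepoint}. Summing and absorbing lower-order terms yields $O(d^2 n + dm\log m)$.

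For privacy, the key observation is that up through Line~\ref{algln:volumes_end} the algorithm applies only data-independent randomness and deterministic post-processing, so for neighboring databases $D \sim D'$ (swap model, per the paper's convention) the multiset $\{\beta_i\}_{i=1}^m$, and hence the sorted matrix $S$ and the volume vector $V$, changes in exactly one coordinate; thus $S$ and $V$ are functions of neighboring ``databases of models,'' which is the setting of all the downstream lemmas. By Lemma~\ref{lem:ptr_dp} the test $\ptr(V, \eps/2, \delta)$ is $(\eps/2)$-DP. Its threshold (Line~\ref{algln:ptr_check} of Algorithm~\ref{alg:ptr}) is calibrated through Lemma~\ref{lem:ptr_2} instantiated with $t = m/4$ and slack $\delta/(8e^{\eps/2})$, which controls the set $\unsafe{\eps/2,\,\delta/2,\,m/4}$, so that: (i) if $D$ is $(\eps/2,\delta/2,m/4)$-unsafe, then Lemma~\ref{lem:ptr_2}(2) applied with $z = D$ forces $M(D) = -1$, and a Laplace tail bound shows $\ptr$ passes with probability at most $\delta$; and (ii) if $D$ is $(\eps/2,\delta/2,m/4)$-safe, then since $\rtukeyem(V,S,m/4,\eps/2)$ is exactly the restricted exponential mechanism $A_{m/4}$ run with budget $\eps/2$ (Lemma~\ref{lem:samplepoint} shows that the two-stage depth-then-point sampler realizes its output distribution), Definition~\ref{def:safety} gives that its output on $D$ is $(\eps/2,\delta/2)$-indistinguishable from its output on any neighbor.

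Finally, I would combine these with a short case analysis over neighboring $D \sim D'$ and output sets $\calS \subseteq \mathbb{R}^d \cup \{\bot\}$. The Laplace noise inside $\ptr$ is independent of the sampling randomness inside $\rtukeyem$, so the release is an adaptive composition of the $(\eps/2)$-DP test with a second stage that, conditioned on the test passing, is $(\eps/2,\delta/2)$-DP when $D$ is safe and is emitted at all with probability at most $\delta$ when $D$ is unsafe. Splitting $\P{}{\calM(D) \in \calS}$ according to whether $\bot \in \calS$ and whether $D$ is safe, and using $e^{\eps/2}\cdot e^{\eps/2} = e^{\eps}$, gives $\P{}{\calM(D)\in\calS} \le e^{\eps}\,\P{}{\calM(D')\in\calS} + \delta$, i.e.\ $(\eps,\delta)$-DP. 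I expect the only genuinely delicate point to be this last bookkeeping: checking that the failure mass from the unsafe branch and from the safe-branch indistinguishability really add up to at most $\delta$ without picking up a stray $e^{\eps/2}$ factor (one needs, e.g., to bound $p_D \cdot \tfrac{\delta}{2} \le \tfrac{\delta}{2}$ using $p_D \le 1$ rather than the privacy bound on $p_D$), which is exactly what the extra constants ($8e^{\eps}$, the halved $\delta$) in Algorithm~\ref{alg:ptr} are there to absorb; everything else is a direct appeal to the lemmas already proved.
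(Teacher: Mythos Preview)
Your proposal is correct and follows the same approach as the paper's proof: a line-by-line runtime accounting via Lemmas~\ref{lem:compute_volumes}, \ref{lem:ptr_dp}, and \ref{lem:samplepoint}, together with the PTR composition for privacy (the check is $\eps/2$-DP by \cref{lem:ptr_2}; safety yields $(\eps/2,\delta/2)$-indistinguishability for the restricted sampler; unsafe databases pass with probability at most $\delta$). You are in fact more explicit than the paper on two points---absorbing the $O(md^3)$ inversion cost via the implicit constraint $m\le n/d$, and spelling out the final $\delta$-bookkeeping case analysis that the paper compresses to ``by composition''---but the overall structure is identical.
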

\begin{proof}
    Line \ref{algln:ptr} of the $\alg$ pseudocode in Algorithm~\ref{alg:main} calls the check with privacy parameters $\eps/2$ and $\delta/[8e^\eps]$. By the sensitivity guarantee of \cref{lem:ptr_2}, the check itself is $\eps/2$-DP. By the safety guarantee of \cref{lem:ptr_2} and our choice of threshold, if it passes, with probability at least $1-\delta/2$, the given database lies in $\safe{\eps/2, \delta/2, t}$. A passing check therefore ensures that the sampling step in Line~\ref{algln:rtukeyem} is $(\eps/2, \delta)$-DP. By composition, the overall privacy guarantee is $(\eps, \delta)$-DP. Turning to runtime, the $m$ OLS computations inside Line~\ref{algln:ols} each multiply $d \times \tfrac{n}{m}$ and $\tfrac{n}{m} \times d$ matrices, for $O(d^2n)$ time overall. From \cref{lem:compute_volumes}, Lines~\ref{algln:compute_S} to~\ref{algln:volumes_end} take time $O(dm\log(m))$. \cref{lem:ptr_dp} gives the $O(m\log(m))$ time for Line~\ref{algln:ptr}, and \cref{lem:samplepoint} gives the $O(d)$ time for Line~\ref{algln:rtukeyem}.
\end{proof}
\section{Experiments}
\label{sec:experiments}
\arxiv{We now turn to an empirical evaluation of $\alg$. A description of baseline comparison algorithms appears in Section~\ref{subsec:baselines}, and details for the datasets used appear in Section~\ref{subsec:datasets}. Section~\ref{subsec:setting_m} describes experiments on synthetic data to investigate the role of the number of models $m$ in $\alg$'s performance. We then present and discuss experiments evaluating accuracy (\cref{subsec:accuracy}) and runtime (\cref{subsec:time}). All experiment code can be found on Github~\citep{G22}.}

\subsection{Baselines}
\label{subsec:baselines}
\begin{enumerate}
    \item $\nondp$ computes the standard non-private OLS estimator $\beta^* = (X^TX)^{-1}X^Ty$.
    \item $\adassp$~\citep{W18} computes a DP OLS estimator based on noisy versions of $X^TX$ and $X^Ty$. This requires the end user to supply bounds on both $\|X\|_2$ and $\|y\|_2$. Our implementation uses these values non-privately for each dataset. The implementation is therefore not private and represents an artificially strong version of $\adassp$. As specified by~\citet{W18}, $\adassp$ (privately) selects a ridge parameter and runs ridge regression\arxiv{ using the noisy sufficient statistics and the selected parameter}.
    \item $\dpsgd$~\citep{ACGMMTZ16} uses DP-SGD, as implemented in TensorFlow Privacy and Keras~\citep{C15}, to optimize mean squared error using a single linear layer. The layer's weights are regression coefficients. A discussion of hyperparameter selection appears in \cref{subsec:accuracy}. As we will see, appropriate choices of these hyperparameters is both dataset-specific and crucial to $\dpsgd$'s performance. Since we allow $\dpsgd$ to tune these non-privately for each dataset, our implementation of $\dpsgd$ is also artificially strong.
\end{enumerate}
\narxiv{All experiment code can be found on Github~\citep{G22}.}

\subsection{Datasets}
\label{subsec:datasets}
We evaluate all four algorithms on the following datasets. The first dataset is synthetic, and the rest are real. The datasets are intentionally selected to be relatively easy use cases for linear regression, as reflected by the consistent high $R^2$ for $\nondp$.\footnote{$R^2$\arxiv{, also known as the coefficient of determination,} measures the variation in labels accounted for by the features. $R^2=1$ is perfect, $R^2=0$ is the trivial baseline achieved by simply predicting the average label, and $R^2 < 0$ is worse than the trivial baseline.} However, we emphasize that, beyond the constraints on $d$ and $n$ suggested by Section~\ref{subsec:setting_m}, they have not been selected to favor $\alg$: all feature selection occurred before running any of the algorithms, and we include all datasets evaluated where $\nondp$ achieved a positive $R^2$. A complete description of the datasets appears both in the public code and the Appendix's \cref{subsec:features}. For each dataset, we additionally add an intercept feature.

\begin{enumerate}
    \item \textbf{Synthetic} ($d=11$, $n=22{,}000$, \cite{PVGMT+11}). This dataset uses \texttt{sklearn.make\_regression} and $N(0, \sigma^2)$ label noise with $\sigma = 10$.
    \item \textbf{California} ($d=9$, $n= 20{,}433$, \cite{N17}) predicting house price.
    \item \textbf{Diamonds} ($d=10$, $n = 53{,}940$, \cite{A17}), predicting diamond price. 
    \item \textbf{Traffic} ($d=3$, $n = 7{,}909$, \cite{N13}), predicting number of passenger vehicles.
    \item \textbf{NBA} ($d=6$, $n = 21{,}613$, \cite{L22}), predicting home team score.
    \item \textbf{Beijing} ($d=25$, $n = 159{,}375$, \cite{R18}), predicting house price.
    \item \textbf{Garbage} ($d=8$, $n = 18{,}810$, \cite{N22}), predicting tons of garbage collected.
    \item \textbf{MLB} ($d = 11$, $n = 140{,}657$, \cite{S18}), predicting home team score.
\end{enumerate}

\narxiv{}

\subsection{Choosing the Number of Models}
\label{subsec:setting_m}
Before turning to the results of this comparison, recall from Section~\ref{sec:algorithm} that $\alg$ privately aggregates $m$ non-private OLS models. \arxiv{We briefly discuss the selection of this parameter $m$. }\narxiv{If $m$ is too low, $\ptr$ will probably fail; if $m$ is too high, and each model is trained on only a small number of points, even a non-private aggregation of inaccurate models will be an inaccurate model as well.}
\arxiv{An appropriate choice of $m$ balances two desiderata: accuracy of the models and likelihood of passing $\ptr$. When $m$ is low, each model is trained on many data points and is likely to have high accuracy. However, lower $m$ increases the risk of failing $\ptr$, as $\ptr$ works best when the space of models is ``dense.'' At the other extreme, we could set $m$ to be as large as possible, $m = \lfloor n/d \rfloor$. However, this raises the possibility that even a non-private aggregation of many inaccurate models will be an inaccurate model as well. }

Experiments on synthetic data support this intuition. In \arxiv{Figure~\ref{fig:distance_vs_num_models}}\narxiv{the left plot in \cref{fig:heuristic_and_time}}, each solid line represents synthetic data with a different number of features, generated by the same process as the Synthetic dataset described in the previous section. We vary the number of models $m$ on the $x$-axis and plot the distance computed by Lemma~\ref{lem:ptr_2}. As $d$ grows, the number of models required to pass the $\ptr$ threshold, demarcated by the dashed horizontal line, grows as well.

\arxiv{\begin{figure*}[h]
    \centering
    \includegraphics[scale=0.7]{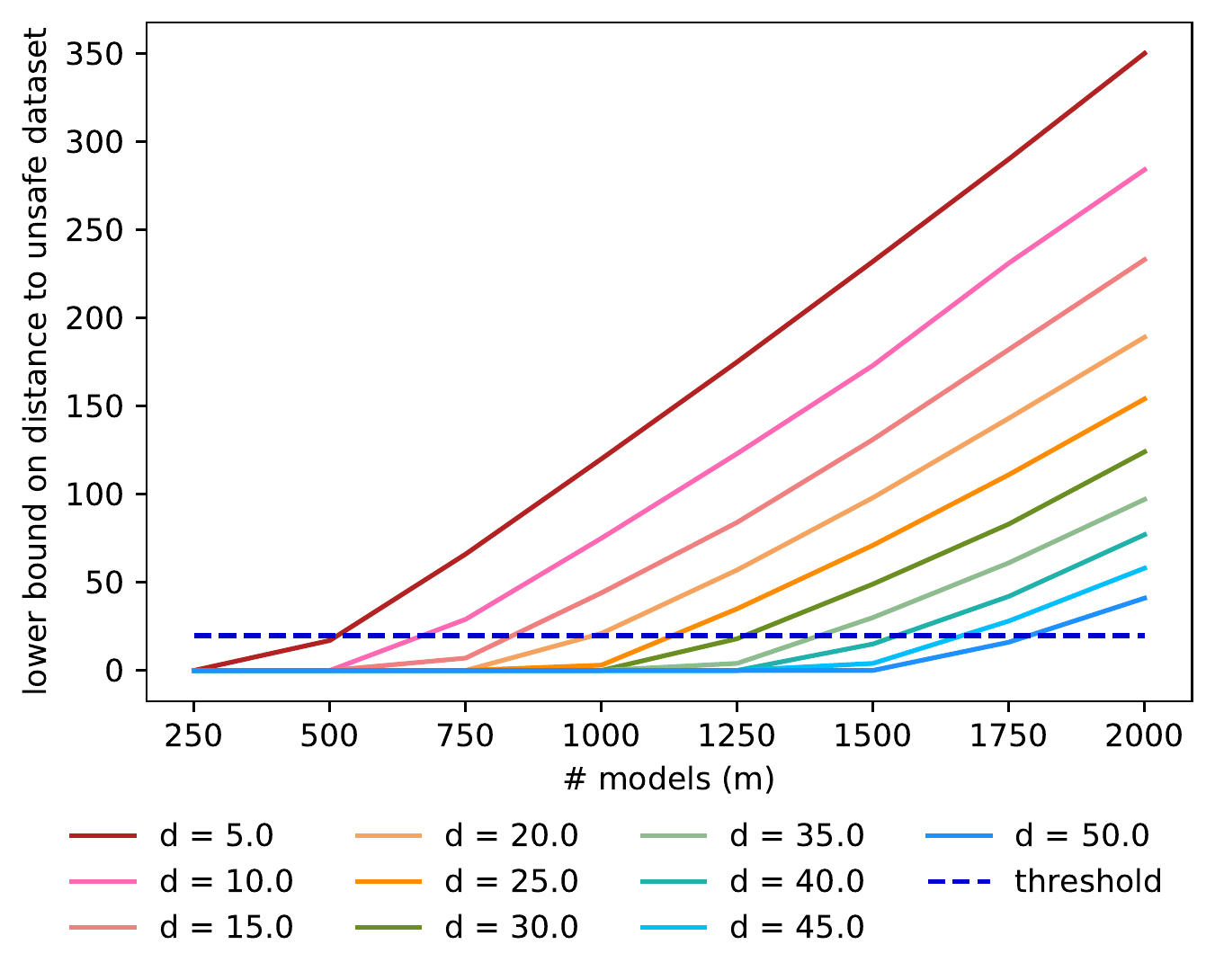}
    \caption{A plot of the lower bound on the Hamming distance to an unsafe dataset, computed from Lemma~\ref{lem:ptr}, as the number of models $m$ ranges over $250, 500, \ldots, 2000$, and the feature dimension $d$ ranges over $5, 10, \ldots, 50$ for synthetic data. The threshold for PTR to pass is the dashed horizontal line. As might be expected, larger $d$ requires larger $m$ to pass this threshold.}
    \label{fig:distance_vs_num_models}
\end{figure*}}

\arxiv{One caveat for the plot in Figure~\ref{fig:distance_vs_num_models} is that the Synthetic regression problem is unusually easy. As shown in Section~\ref{subsec:accuracy}, all DP algorithms obtain near-perfect $R^2$; as demonstrated by the other datasets, this is atypical. It nonetheless suggests that a minimum of 500 models is necessary for the PTR step to pass even for $d \leq 5$.}

To select the value of $R^2$ used for $\alg$, we ran it on each dataset using $m = 250, 500, \ldots, 2000$ and selected the smallest $m$ where all PTR checks passed. We give additional details in \cref{subsec:accuracy_extended} but note here that the resulting choices closely track those given by \arxiv{\cref{fig:distance_vs_num_models}}\narxiv{\cref{fig:heuristic_and_time}}. Furthermore, across many datasets, simply selecting $m=1000$ typically produces nearly optimal $R^2$, with several datasets exhibiting little dependence on the exact choice of $m$.

\subsection{Accuracy Comparison}
\label{subsec:accuracy}
Our main experiments compare the four methods at $(\ln(3), 10^{-5})$-DP. A concise summary of the experiment results appears in \cref{fig:results}. For every method other than $\nondp$ (which is deterministic), we report the median $R^2$ values across the trials. For each dataset, the methods with interquartile ranges overlapping that of the method with the highest median $R^2$ are bolded. Extended plots recording $R^2$ for various $m$ appear in \cref{subsec:accuracy_extended}. All datasets use 10 trials, except for California and Diamonds, which use 50.

\begin{figure}[h]
\begin{center}
\begin{tabular}{ |c|c|c|c|c|c| } 
 \hline
 Dataset & $\nondp$ & $\adassp$ & $\alg$ & $\dpsgd$ (tuned) & $\dpsgd$ (90\% tuned)\\ 
 \hline \hline
 Synthetic & 0.997 & 0.991 & $\mathbf{0.997}$ & $\mathbf{0.997}$ & $\mathbf{0.997}$ \\ 
 \hline
 California & 0.637 & -1.285 & $\mathbf{0.099}$ & $\mathbf{0.085}$ & -1.03 \\ 
 \hline
 Diamonds & 0.907 & 0.216 & 0.307 & $\mathbf{0.828}$ & 0.371 \\
 \hline
 Traffic & 0.966 & 0.944 & $\mathbf{0.965}$ & 0.938 & 0.765 \\
 \hline
 NBA & 0.621 & 0.018 & $\mathbf{0.618}$ & 0.531 & 0.344 \\
 \hline
 Beijing & 0.702 & 0.209 & $\mathbf{0.698}$ & 0.475 & 0.302 \\
 \hline
 Garbage & 0.542 & 0.119 & $\mathbf{0.534}$ & 0.215 & 0.152 \\
 \hline
 MLB &  0.722 & 0.519 & $\mathbf{0.721}$ & 0.718 & 0.712 \\
 \hline
\end{tabular}
\end{center}
\caption{For each dataset, the DP methods with interquartile ranges overlapping that of the DP method with the highest median $R^2$ are bolded.} 
    \label{fig:results}
\end{figure}

A few takeaways are immediate. First, on most datasets $\alg$ obtains $R^2$ exceeding or matching that of both $\adassp$ and $\dpsgd$. $\alg$ achieves this even though $\adassp$ receives non-private access to the true feature and label norms, and $\dpsgd$ receives non-private access to extensive hyperparameter tuning.

We briefly elaborate on the latter. Our experiments tune $\dpsgd$ over a large grid consisting of 2,184 joint hyperparameter settings, over \textbf{learning\_rate} $\in \{10^{-6},10^{-5},\dots,1\}$, \textbf{clip\_norm} $\in \{10^{-6},10^{-5},\dots,10^6\}$, \textbf{microbatches} $\in \{2^5,2^6,\dots,2^{10}\}$, and $\textbf{epochs} \in \{1,5,10,20\}$. Ignoring the extensive computational resources required to do so at this scale (100 trials of each of the 2,184 hyperparameter combinations, for each dataset), we highlight that even mildly suboptimal hyperparameters are sufficient to significantly decrease $\dpsgd$'s utility. \cref{fig:results} quantifies this by recording the $R^2$ obtained by the hyperparameters that achieved the highest and 90th percentile median $R^2$ during tuning. While the optimal hyperparameters consistently produce results competitive with or sometimes exceeding that of $\alg$, even the mildly suboptimal hyperparameters nearly always produce results significantly worse than those of $\alg$. The exact hyperparameters used appear in \cref{subsec:accuracy_extended}.

We conclude our discussion of $\dpsgd$ by noting that it has so far omitted any attempt at differentially private hyperparameter tuning. We suggest that the results here indicate that any such method will need to select hyperparameters with high accuracy while using little privacy budget, and emphasize that the presentation of $\dpsgd$ in our experiments is generous.

\arxiv{\begin{figure}
    \centering
    \includegraphics[scale=0.5]{beijing_nondp.pdf}
    \includegraphics[scale=0.5]{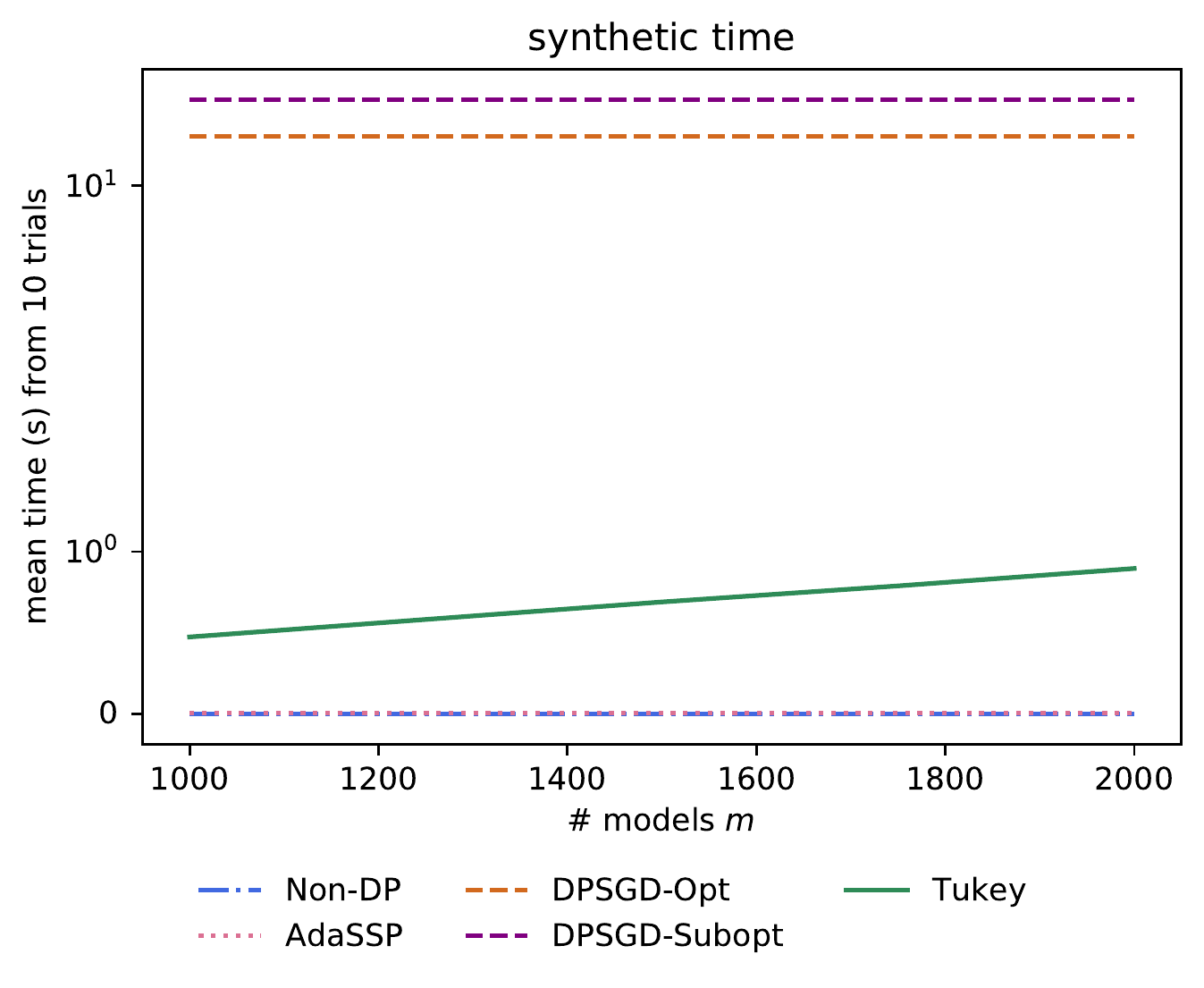}
    \caption{Left: plot of median $R^2$ obtained by $\nondp$ as the number of random samples from Beijing it uses varies. Right: plot of average time in seconds as the number of models $m$ used by $\alg$ varies}
    \label{fig:beijing_and_time}
\end{figure}}

Overall, $\alg$'s overall performance on the eight datasets is strong. We propose that the empirical evidence is enough to justify $\alg$ as a first-cut method for linear regression problems whose data dimensions satisfy its heuristic requirements ($n \gtrsim 1000 \cdot d$).

\subsection{Time Comparison}
\label{subsec:time}
We conclude with a brief discussion of runtime. The rightmost plot in Figure~\ref{fig:heuristic_and_time} records the average runtime in seconds over 10 trials of each method. $\alg$ is slower than the covariance matrix-based methods $\nondp$ and $\adassp$, but it still runs in under one second, and it is substantially faster than $\dpsgd$. $\alg$'s runtime also, as expected, depends linearly on the number of models $m$. Since the plots are essentially identical across datasets, we only include results for the Synthetic dataset here. Finally, we note that, for most reasonable settings of $m$, $\alg$ has runtime asymptotically identical to that of $\nondp$ (\cref{thm:main_approx}). The gap in practical performance is likely a consequence of the relatively unoptimized nature of our implementation.

\narxiv{\begin{figure}
    \centering
    \includegraphics[scale=0.5]{num_models_plot.pdf}
    \includegraphics[scale=0.5]{synthetic_time.pdf}
    \caption{Left: plot of Hamming distance to unsafety using \cref{lem:ptr_2} as the feature dimension $d$ and number of models $m$ varies, using $(\ln(3), 10^{-5})$-DP and $n = (d+1)m$ throughout. Right: plot of average time in seconds as the number of models $m$ used by $\alg$ varies}
    \label{fig:heuristic_and_time}
\end{figure}}

\section{Future Directions}
\label{sec:conclusion}
An immediate natural extension of $\alg$ would generalize the approach to  similar problems such as logistic regression. More broadly, while this work focused on linear regression for the sake of simplicity and wide applicability, the basic idea of $\alg$ can in principle be applied to select from arbitrary non-private models that admit expression as vectors in $\mathbb{R}^d$. Part of our analysis observed that $\alg$ may benefit from algorithms and data that lead to Gaussian-like distributions over models; describing the characteristics of algorithms and data that induce this property --- or a similar property that better characterizes the performance of $\alg$ --- is an open question.
\section{Acknowledgments}
\label{sec:acks}
We thank Gavin Brown for helpful discussion of~\citet{BGSUZ21}, and we thank Jenny Gillenwater for useful feedback on an early draft. We also thank attendees of the Fields Institute Workshop on Differential Privacy and Statistical Data Analysis for helpful general discussions.

\newpage

\bibliographystyle{iclr2023_conference}
\bibliography{references}

\newpage

\section{Appendix}
\label{sec:appendix}

\subsection{Illustration of Exact and Approximate Tukey Depth}
\label{subsec:depth_illustration}
\begin{figure}[h]
    \centering
    \includegraphics[scale=0.5]{exact_depth.pdf}
    \includegraphics[scale=0.5]{approx_depth.pdf}
    \caption{An illustrated comparison between exact (left) and approximate (right) Tukey depth. In both figures, the set of points is $\{(1, 1), (7, 3), (5, 7), (3, 3), (5, 5), (6, 3)\}$, the region of depth 0 is white, the region of depth 1 is light gray, and the region of depth 2 is dark gray. Note that for exact Tukey depth, the regions of different depths form a sequence of nested convex polygons; for approximate Tukey depth, they form a sequence of nested rectangles.}
    \label{fig:depth_example}
\end{figure}

\subsection{Omitted Proofs}
\label{subsec:omitted_proofs}

We start with the proof of our utility result for exact Tukey depth, \cref{thm:tukey-depth-convergence}.

\begin{proof}[Proof of \cref{thm:tukey-depth-convergence}]
This is a direct application of the results of~\citet{BGSUZ21}. They analyzed a notion of probabilistic (and normalized) Tukey depth over samples from a distribution:  $T_{\calN(\mu,\Sigma)}(y) := \min_{v}\P{X \sim \calN(\mu, \Sigma)}{\langle X, v \rangle \geq \langle y,v\rangle}$. Their Proposition 3.3 shows that $T_{\calN(\mu,\Sigma)}(y)$ can be characterized in terms of $\Phi$, the CDF of the standard one-dimensional Gaussian distribution. Specifically, they show $T_{\calN(\mu,\Sigma)}(y) = \Phi(-\|y-\mu \|_{\Sigma})$. From their Lemma 3.5, if $m\geq c \left(\frac{d+\log (1/\gamma)}{\alpha^2} \right)$, then with probability 
$1-\gamma$, $|p/m - T_{\calN(\beta^*,\Sigma)}(\hat{\beta})| \leq \alpha$. Thus
\begin{align*}
    -\alpha \leq&\ T_{\calN(\beta^*, \Sigma)}(\hat \beta) - p/m \\
    p/m - \alpha \leq&\ \Phi(-\|\hat \beta - \beta^*\|_\Sigma) \\
    p/m - \alpha \leq&\ 1 - \Phi(\|\hat \beta - \beta^*\|_\Sigma) \\
    \Phi(\|\hat \beta - \beta^*\|_\Sigma) \leq&\ 1 - p/m + \alpha \\
    \|\hat \beta - \beta^*\|_\Sigma \leq&\ \Phi^{-1}(1 - p/m + \alpha)
\end{align*}
where the third inequality used the symmetry of $\Phi$.
\end{proof}

Next, we prove our result about computing the volumes associated with different approximate Tukey depths, \cref{lem:compute_volumes}.

\begin{proof}[Proof of \cref{lem:compute_volumes}]
     By the definition of approximate Tukey depth, for arbitrary $y = (y_1, \ldots, y_d)$ of Tukey depth at least $i$, each of the $2d$ halfspaces $h_{y_1 \cdot e_1}, h_{y_1 \cdot -e_1}, \ldots, h_{y_d \cdot e_d}, h_{y_d \cdot -e_d}$ contains at least $i$ points from $D$, where $x \cdot y$ denotes multiplication of a scalar and vector. Fix some dimension $j \in [d]$.  Since $\min(|h_{y_j \cdot e_j} \cap D|, |h_{y_j \cdot -e_j} \cap D|) \geq i$, $y_j \in [S_{j, i}, S_{j, m-(i-1)}]$. Thus $V_{i, D} = \prod_{j=1}^d (S_{j, m-(i-1)} - S_{j, i})$. The computation of $S$ starting in Line~\ref{algln:compute_S} sorts $d$ arrays of length $m$ and so takes time $O(dm\log(m))$. Line~\ref{algln:volumes_start} iterates over $m/2$ depths and computes $d$ quantities, each in constant time, so its total time is $O(dm)$.
\end{proof}

The next result is our 1-sensitive and efficient adaptation of the lower bound from \citet{BGSUZ21}. We first restate that result. While their paper uses swap DP, the same result holds for add-remove DP. 

\begin{lemma}[Lemma 3.8~\citep{BGSUZ21}]
\label{lem:ptr}
    For any $k \geq 0$, if there exists a $g > 0$ such that $\frac{V_{t-k-1, D}}{V_{t+k+g+1, D}} \cdot e^{-\eps g/ 2} \leq \delta$, then for every database $z$ in $\unsafe{\eps, 4e^\eps\delta, t}$, $d_H(D, z) > k$, where $d_H$ denotes Hamming distance.
\end{lemma}

We now prove its adaptation, \cref{lem:ptr_2}.

\begin{proof}[Proof of \cref{lem:ptr_2}]
We first prove item 1. Let $D$ and $D'$ be neighboring databases, $D' = D \cup \{x\}$, and let $k^*_D$ and $k^*_{D'}$ denote the mechanism's outputs on the respective databases. It suffices to show $|k^*_D - k^*_{D'}| \leq 1$.

Consider some $V_{p}$ for nonnegative integer $p$. If $x$ has depth less than $p$ in $D$, then $V_{p-1, D} \geq V_{p, D'} \geq V_{p,D} > V_{p+1,D}$. Otherwise, $V_{p+1, D} < V_{p, D} = V_{p, D'} < V_{p-1, D}$. In either case,
\begin{equation}
\label{eq:Vs}
    V_{p+1, D} < V_{p, D} \leq V_{p, D'} \leq V_{p-1, D}.
\end{equation}

Now suppose there exist $k^*_{D'} \geq 0$ and $g^*_{D'} > 0$ such that
$\tfrac{V_{t-k^*_{D'}-1,D'}}{V_{t+k^*_{D'} + g^*_{D'}+1,D'}} \cdot e^{-\eps g^*_{D'}/2} \leq \delta$. Then by \cref{eq:Vs}, $\tfrac{V_{t-k^*_{D'},D}}{V_{t+k^*_{D'} + g^*_{D'},D}} \cdot e^{-\eps g^*_{D'}/2} \leq \delta$, so $k^*_D \geq k^*_{D'} -1$. Similarly, if there exist $k^*_D \geq 0$ and $g^*_D > 0$ such that
$\tfrac{V_{t-k^*_D-1,D}}{V_{t+k^*_D + g^*_D+1,D}} \cdot e^{-\eps g^*_D/2} \leq \delta$, then by \cref{eq:Vs}, $\tfrac{V_{t-k^*_D,D'}}{V_{t+k^*_D + g^*_D,D'}} \cdot e^{-\eps g^*_{D'}/2} \leq \delta$, so $k^*_{D'} \geq k^*_D -1$. Thus if $k^*_D \geq 0$ or $k^*_{D'} \geq 0$, $|k^*_D - k^*_{D'}| \leq 1$. The result then follows since $k^* \geq -1$.

We now prove item 2. This holds for $k \geq 0$ by Lemma~\ref{lem:ptr}; for $k = -1$, the lower bound on distance to unsafety is the trivial one, $d_H(D, z) \geq 0$.
\end{proof}

The next proof is for \cref{lem:ptr_dp}, which verifies the overall privacy and runtime of $\ptr$.

\begin{proof}[Proof of \cref{lem:ptr_dp}]
    The privacy guarantee follows from the 1-sensitivity of computing $k$ (\cref{lem:ptr_2}). For the runtime guarantee, we perform a binary search over the distance lower bound $k$ starting from the maximum possible $\nicefrac{m}{4}$ and, for each $k$, an exhaustive search over $g$. Note that if some $k, g$ pair satisfies the inequality in Lemma~\ref{lem:ptr}, there exists some $g'$ for every $k' < k$ that satisfies it as well. Thus since both have range $\leq \nicefrac{m}{4}$, the total time is $O(m\log(m))$.
\end{proof}

\subsection{Using Monotonicity}
\label{subsec:monotone}
This section discusses our use of monotonicity in the restricted exponential mechanism. \cref{def:em} states that, if $u$ is monotonic, the exponential mechanism can sample an output $y$ with probability proportional to $\exp\left(\frac{\eps u(D,y)}{\Delta_u}\right)$ and satisfy $\eps$-DP. Approximate Tukey depth is monotonic, so our application can also sample from this distribution. It remains to incorporate monotonicity into the PTR step.

It suffices to show that \cref{lem:ptr} also holds for a restricted exponential mechanism using a monotonic score function. Turning to the proof of \cref{lem:ptr} given by~\citet{BGSUZ21}, it suffices to prove their Lemma 3.7 using $w_x(S) = \int_S\exp(\eps q(x;y))dy$. Note that their $w_x(S)$ differs by the 2 in its denominator inside the exponent term; this modification is where we incorporate monotonicity. This difference shows up in two places in their argument. First, we can replace their bound
\[
    \frac{\P{}{M_{\eps, t}(x)=y}}{\P{}{M_{\eps, t}(x')=y}} \leq e^{\eps/2} \cdot \frac{w_{x'}(Y_{t,x'})}{w_x(Y_{t,x})} \leq e^\eps \cdot \frac{w_{x}(Y_{t,x'})}{w_x(Y_{t,x})}
\]
with the two cases that arise in add-remove differential privacy. The first considers $x' \subsetneq x$ and yields
\[
    \frac{\P{}{M_{\eps, t}(x)=y}}{\P{}{M_{\eps, t}(x')=y}} \leq e^\eps \cdot \frac{w_{x'}(Y_{t,x'})}{w_x(Y_{t,x})} \leq e^\eps \cdot \frac{w_{x}(Y_{t,x'})}{w_x(Y_{t,x})}
\]
since the mechanism on $x$ never assigns a lower score to an output than on $x'$. Using the same logic, the second considers $x \subsetneq x'$, and we get
\[
    \frac{\P{}{M_{\eps, t}(x)=y}}{\P{}{M_{\eps, t}(x')=y}} \leq  \frac{w_{x'}(Y_{t,x'})}{w_x(Y_{t,x})} \leq e^\eps \cdot \frac{w_{x}(Y_{t,x'})}{w_x(Y_{t,x})}.
\]
The second application in their argument, which bounds $\frac{\P{}{M_{\eps, t}(x')=y}}{\P{}{M_{\eps, t}(x)=y}}$, uses the same logic. As a result, their Lemma 3.7 also holds for a monotonic restricted exponential mechanism, and we can drop the 2 in the sampling distribution as desired.

\subsection{Sampling From a Region Details}
\label{subsec:sample_region_details}
We start by formally defining our partition.

\begin{definition}
\label{def:partition}
Given $d$-dimensional database $D$ and dimension $j \in [d]$, for $y \in \mathbb{R}^d$, let $T_{D, j}(y)$ denote the exact (one-dimensional) Tukey depth of point $y$ with respect to dimension $j$ in database $D$. Let $B_i$ denote the region of points with approximate Tukey depth $i$. 
Define the partition $\{C_{j, i}\}_{j=1}^d$ of $B_i$ as the volume of points where depth $i$ occurs in dimension $j$ for the first time, i.e., 
    \[
        C_{j, i} := \{y \in \mathbb{R}^d \mid \min_{j' < j}  T_{D, j'}(y) > i \text{ and } T_{D, j}(y) = i \text{ and } \min_{j' > j} T_{D, j'}(y) \geq i\}.
    \]
\end{definition}

The partition is well defined because any point with approximate Tukey depth $i$ is in exactly one of the $C_{j, i}$ volumes. Each $C_{j,i}$ is also the Cartesian product of three sets: any $y \in C_{j,i}$ must have 1) depth strictly greater than $i$ in dimensions $1,...,j-1$, 2) depth $i$ in dimension $j$, and 3) depth at least $i$ in dimensions $j+1,...,d$. Being the Cartesian product of three sets, the total volume of $C_{j,i}$ can be computed as the product of the three corresponding volumes in lower dimensions. We will denote these by $V_{<j,i}, W_{j,i}, V_{> j, i}$, formalized below.

\begin{definition}
\label{def:volume_sampling}
    Given $d$-dimensional database $D$, dimension $j \in [d]$, and depth $i$, define 
    \begin{enumerate}
        \item $V_{j, i, D} = \vol(\{y_j \mid y \in \mathbb{R}^d, \tilde T_D(y) \geq i\})$, the total length in dimension $j$ of the region with approximate Tukey depth at least $i$.
        \item $W_{j, i, D} = \vol(\{y_j \mid y \in \mathbb{R}^d, T_{D, j}(y) = i \text{ and } \tilde T_D(y) \geq i\})$, the total length in dimension $j$ of the region with depth exactly $i$ in dimension $j$ and approximate Tukey depth at least $i$.
        \item $V_{<j, i, D} = \vol(\{y_{1:j-1} \mid y \in \mathbb{R}^d, \tilde T_D(y) \geq i\})$, the volume of the projection onto the first $j-1$ dimensions of points with approximate Tukey depth at least $i$. Define $V_{>j, i, D}$ analogously.
    \end{enumerate}
    When $D$ is clear from context, we drop it from the subscript.
\end{definition}

The next lemma shows how to compute these and other relevant volumes. We again note that we fix $m$ to be even for neatness. The odd case is similar.

\begin{lemma}
\label{lem:compute_uniform_volumes}
    Given matrix $S \in \mathbb{R}^{d \times m}$ of projected and sorted models, as in Line~\ref{algln:S} of Algorithm~\ref{alg:main}, 
    \begin{enumerate}
        \item $V_{j, i} = S_{j, m-(i-1)} - S_{j, i}$,
        \item $W_{j, i} = V_{j, i} - V_{j, i+1}$,
        \item $V_{<j, i} = \prod_{j'=1}^{j-1} V_{j, i} $ and $ V_{>j, i} = \prod_{j'=j+1}^{d} V_{j, i}$
    \end{enumerate}
\end{lemma}
\begin{proof}
    The proofs of the first item uses essentially the same reasoning as the proof of Lemma~\ref{lem:compute_volumes}. For the second item, any point contributing to $V_{j, i}$ but not $V_{j, i+1}$ has depth exactly $i$ in dimension $j$. For the third item, a point contributes to $V_{<j, i}$ if and only if it has depth at least $i$ in all $d$ dimensions; since the resulting region is a rectangle, its volume is the product of its side lengths.
\end{proof}
With Lemma~\ref{lem:compute_uniform_volumes}, we can now prove that $\samplepoint$ works as intended.
\begin{proof}[Proof of \cref{lem:samplepoint}]
    Given $S \in \mathbb{R}^{d \times m}$, define $B_i$ and $\{C_{j,i} \}^d_{j=1}$ as in \cref{def:partition}. To show the outcome of $\samplepoint(S, i)$ is uniformly distributed over points with approximate Tukey depth $i$, it suffices to show the algorithm samples a $C_{j,i}$ with probability proportional to its volume. Recall that $C_{j, i}$ is the set of points with depth greater than $i$ in dimensions $1, 2, \ldots, j-1$, exactly $i$ in dimension $j$, and at least $i$ in the remaining dimensions. With this interpretation, $C_{j,i}$ is the Cartesian product of three lower dimensional regions, and thus its volume is the product of the corresponding volumes,  
    \[
        \vol(C_{j,i}) = V_{< j, i+1} \cdot W_{j, i} \cdot V_{> j, i} .
    \]
    These quantities, along with the normalizing constant $V_{\geq 1, i}$, can be computed using \cref{lem:compute_uniform_volumes}.
    
    Since $i$ is fixed, computing the full set of $V_{j, i}$ and $W_{j, i}$ takes time $O(d)$, and by tracking partial sums and using logarithms, we can compute the full set of $V_{<j, i}$ and $V_{>j, i}$ in time $O(d)$ as well. The last step is sampling the final point $y$, which takes time $O(d)$ using the previously computed $S$.
\end{proof}

\begin{algorithm}
    \begin{algorithmic}[1]
       \STATE {\bfseries Input:} Tukey depth region volumes $V$, sorted collection of estimators $S$, depth restriction $t$, privacy parameter $\eps$ \alglinelabel{algln:rtukeyem_input}
       \FOR{$i = t, t+1, \ldots, |V|-1$}
        \STATE Compute volume of region of Tukey depth exactly $i$, $W_i \gets V_i - V_{i+1}$
       \ENDFOR
       \STATE Sample depth $\hat i$ from distribution where $\P{}{i} \propto W_i \exp\left(\frac{\eps \cdot i}{2}\right)$
       \STATE Return $y \gets \samplepoint(S, \hat i)$
    \end{algorithmic}
    \caption{$\rtukeyem$}
    \label{alg:rtukeyem}
\end{algorithm}

\begin{algorithm}
    \begin{algorithmic}[1]
        \STATE {\bfseries Input:} Sorted collection of estimators $S$, depth $i$ \alglinelabel{algln:samplepoint_input}
        \STATE $d, m \gets $ number of rows and columns in $S$
        \STATE Compute $V_{\geq 1, i}$ using Lemma~\ref{lem:compute_uniform_volumes}
        \FOR{$j = 1, \ldots, d$}
            \STATE Compute $V_{<j,i+1}$, $W_{j,i}$, and $V_{> j, i}$ using Lemma~\ref{lem:compute_uniform_volumes}
            \STATE Compute $\vol(C_{j, i}) = V_{<j, i+1} \cdot W_{j, i} \cdot V_{>j, i}$
        \ENDFOR
        \STATE Sample index $j^* \in [d]$ with probability $\frac{\vol(C_{j,i})}{V_{\geq 1, i}}$ 
        \FOR{$j' = 1, \ldots, j^*-1$}
            \STATE $y_{j'} \gets$ uniform random sample from $[S_{j',i+1}, S_{j', m-i}]$
        \ENDFOR
        \STATE $y_{j^*} \gets$ uniform random sample from $[S_{j^*,i}, S_{j^*, i+1}) \cup (S_{j^*, m-i}, S_{j^*, m-(i-1)}]$
        \FOR{$j' = j^*+1, \ldots, d$}
            \STATE $y_{j'} \gets$ uniform random sample from $[S_{j',i}, S_{j', m-(i-1)}]$
        \ENDFOR
       \STATE Return $y$
    \end{algorithmic}
    \caption{$\samplepoint$}
    \label{alg:samplepoint}
\end{algorithm}

\subsection{Dataset Feature Selection Details}
\label{subsec:features}
This section provides details for each of the real datasets evaluated in our experiments.

\begin{enumerate}
    \item \textbf{California Housing}~\cite{N17}: The label is \texttt{median\_housevalue}, and the categorical \texttt{ocean\_proximity} is dropped.
    \item \textbf{Diamonds}~\cite{A17}: The label is \texttt{price}. Ordinal categorical features (\texttt{carat}, \texttt{color}, \texttt{clarity}) are replaced with integers $1, 2, \ldots$.
    \item \textbf{Traffic}~\cite{N13}: The label is passenger vehicle count (\texttt{Class 2}), and the remaining features are motorcycles (\texttt{Class 1}) and pickups, panels, and vans (\texttt{Class 3}).
    \item \textbf{NBA}~\cite{L22}: The label is \texttt{PTS\_home}, and the features are \texttt{FT\_PCT\_home}, \texttt{FG3\_PCT\_home}, \texttt{FG\_PCT\_home}, \texttt{AST\_home}, and \texttt{REB\_home}.
    \item \textbf{Beijing Housing}~\cite{R18}: The label is \texttt{totalPrice}. Features are days on market (\texttt{DOM}), \texttt{followers}, area of house in meters (\texttt{square}), number of kitchens (\texttt{kitchen}), \texttt{buildingType}, \texttt{renovationCondition}, building material (\texttt{buildingStructure}), ladders per residence (\texttt{ladderRatio}), elevator presence \texttt{elevator}, whether previous owner owned for at least five years (\texttt{fiveYearsProperty}), proximity to subway (\texttt{subway}), \texttt{district}, and nearby average housing price (\texttt{communityAverage}). Categorical \texttt{buildingType}, \texttt{renovationCondition}, and \texttt{buildingStructure} are encoded as one-hot variables. We additionally removed a single outlier row (60422) whose norm is more than two orders of magnitude larger than that of other points; none of the DP algorithms achieved positive $R^2$ with the row included. An earlier version of this paper retained this outlier.
    \item \textbf{New York Garbage}~\cite{N22}: The label is \texttt{REFUSETONSCOLLECTED}. The features are \texttt{PAPERTONSCOLLECTED} and \texttt{MPGTONSCOLLECTED}. The categorical \texttt{BOROUGH} is encoded as one-hot variables.
    \item \textbf{MLB}~\cite{S18}: The label is home team runs (\texttt{h\_score}) and the features are \texttt{v\_strikeouts}, \texttt{v\_walks}, \texttt{v\_pitchers\_used},
    \texttt{v\_errors}, \texttt{h\_at\_bats}, \texttt{h\_hits}, \texttt{h\_doubles},
    \texttt{h\_triples}, \texttt{h\_homeruns}, and \texttt{h\_stolen\_bases}.
\end{enumerate}

\subsection{Extended Experiment Results}
\label{subsec:accuracy_extended}

\cref{fig:dpsgd_hyperparameters} records, respectively, the number of epochs, the clipping norm, the learning rate, and the number of microbatches for both the best and 90th percentile hyperparameter settings on each dataset.
\begin{figure}[!htb]
\begin{center}
\begin{tabular}{ |c|c|c| } 
 \hline
 Dataset & $\dpsgd$ (tuned) & $\dpsgd$ (90\% tuned)\\ 
 \hline \hline
 Synthetic & $(20, 1, 1, 128)$ & $(20, 10^{-3}, 0.1, 64)$ \\
 \hline
 California & $(20, 100, 1, 64)$ & $(10, 1000, 0.01, 64)$ \\ 
 \hline
 Diamonds & $(20, 10^6, 1, 128)$ & $(10, 10^6, 0.1, 32)$ \\
 \hline
 Traffic & $(1, 10^6, 1, 1024)$ & $(10, 10^5, 0.1, 512)$ \\
 \hline
 NBA & $(20, 100, 1, 512)$ & $(20, 10^{-6}, 0.1, 32)$ \\
 \hline
 Beijing & $(20, 100, 0.01, 512)$ & $(20, 0.1, 0.001, 512)$ \\
 \hline
 Garbage & $(20, 1, 1, 32)$ & $(5, 1000, 1, 64)$ \\
 \hline
 MLB &  $(10, 100, 0.01, 512)$ & $(10, 10^{-5}, 0.001, 128)$ \\
 \hline
\end{tabular}
\end{center}
\caption{Hyperparameter settings used by $\dpsgd$ on each dataset.} 
    \label{fig:dpsgd_hyperparameters}
\end{figure}

\begin{figure}[p]
    \centering
    \includegraphics[scale=0.48]{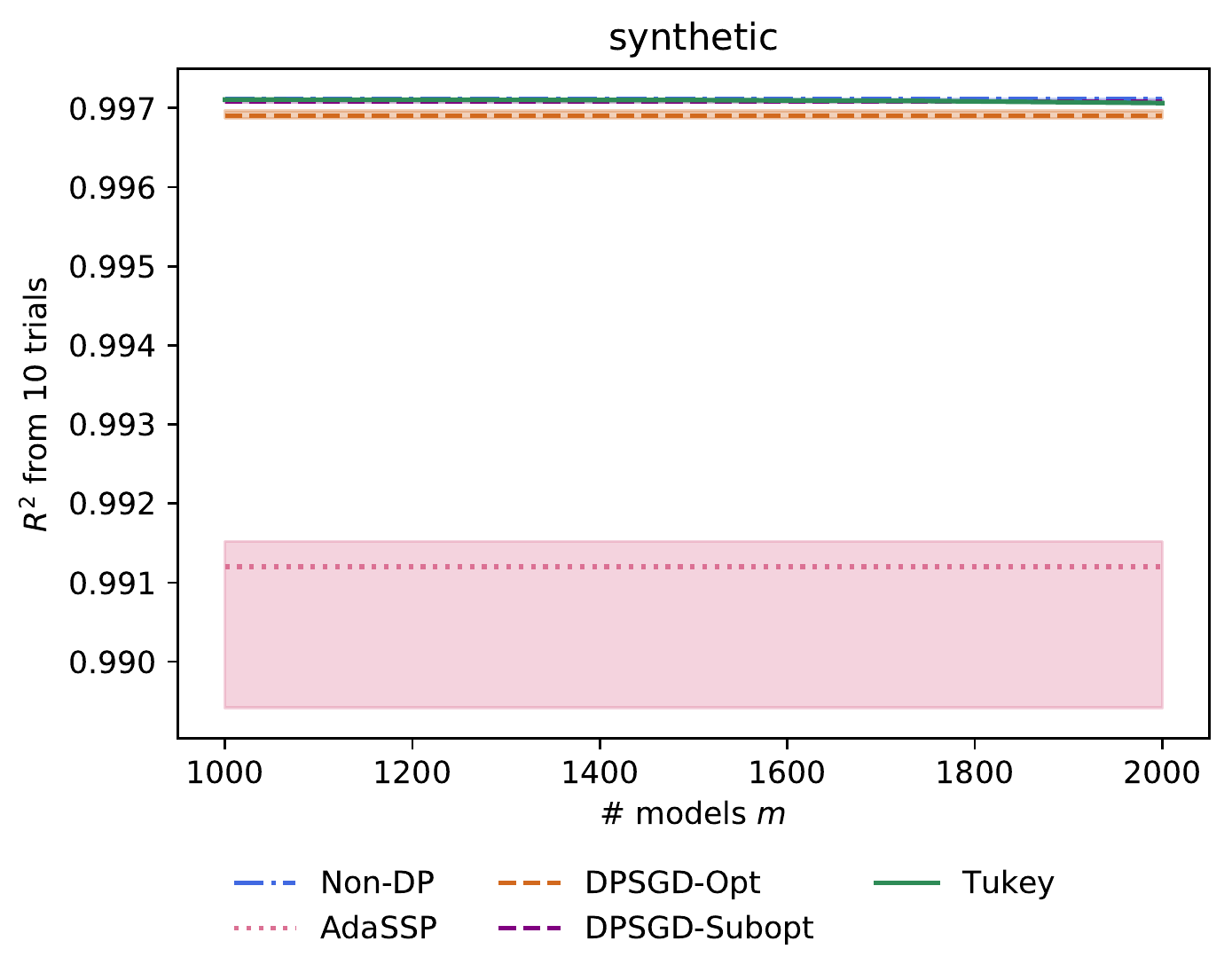}
    \includegraphics[scale=0.48]{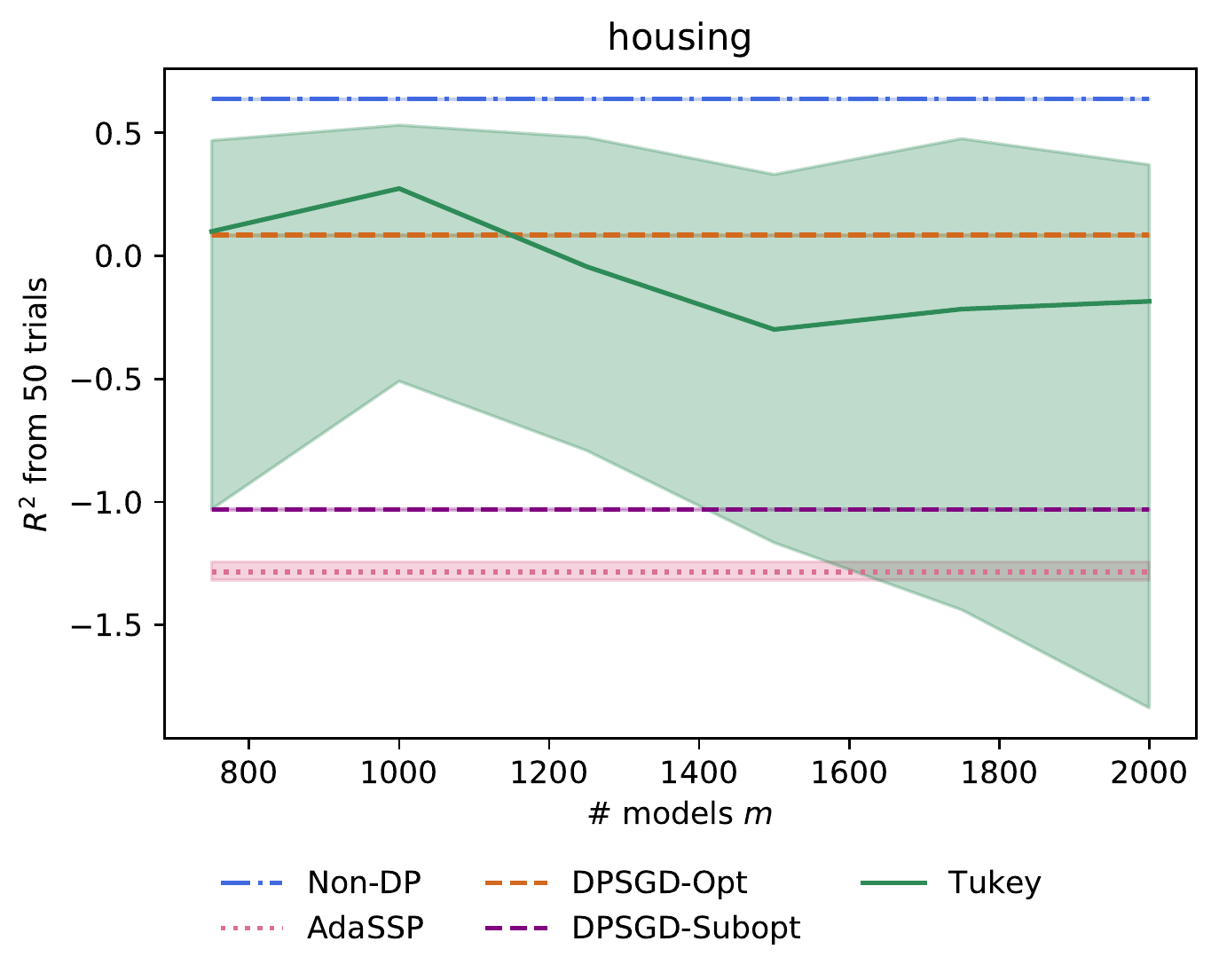}
    \includegraphics[scale=0.48]{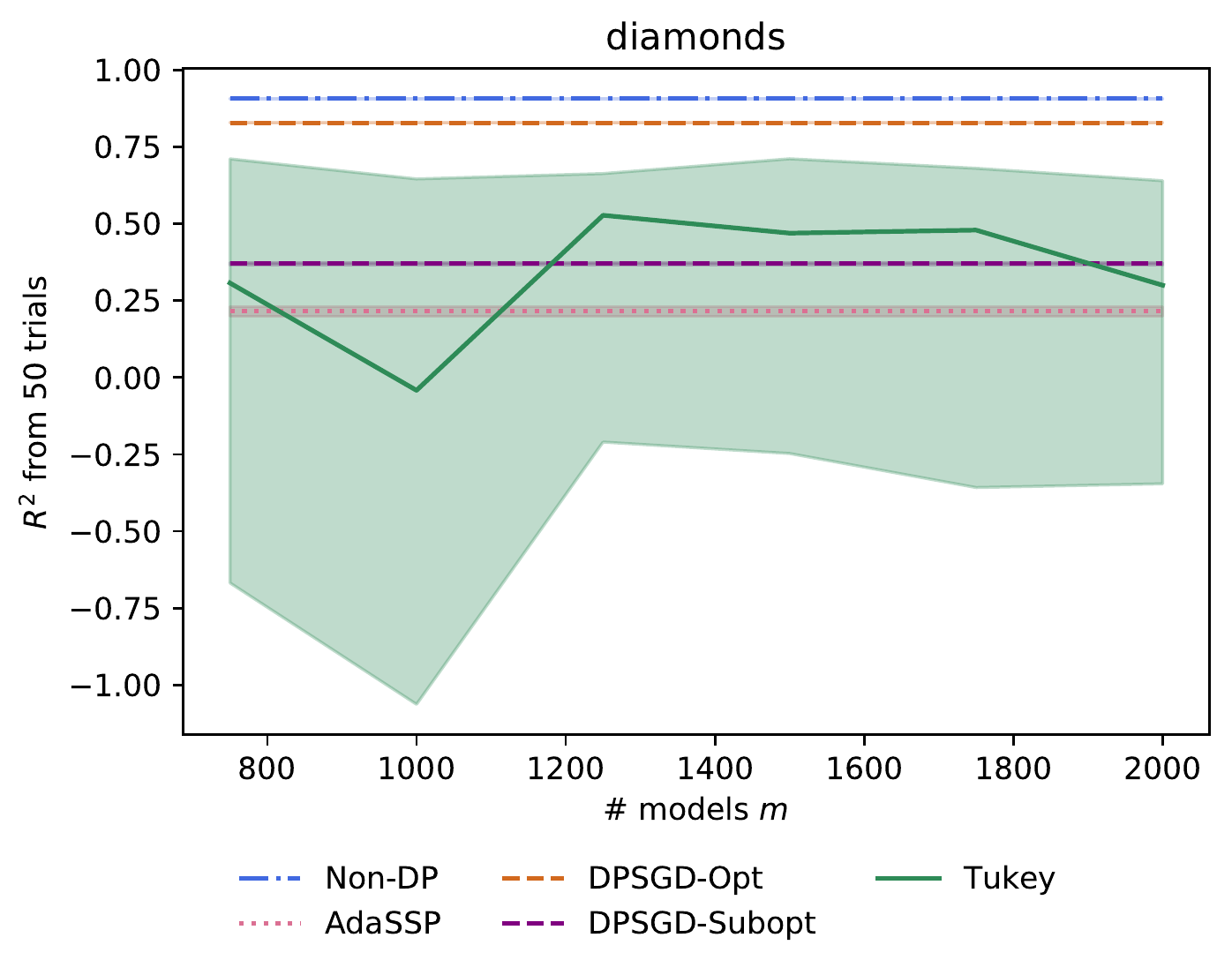}
    \includegraphics[scale=0.48]{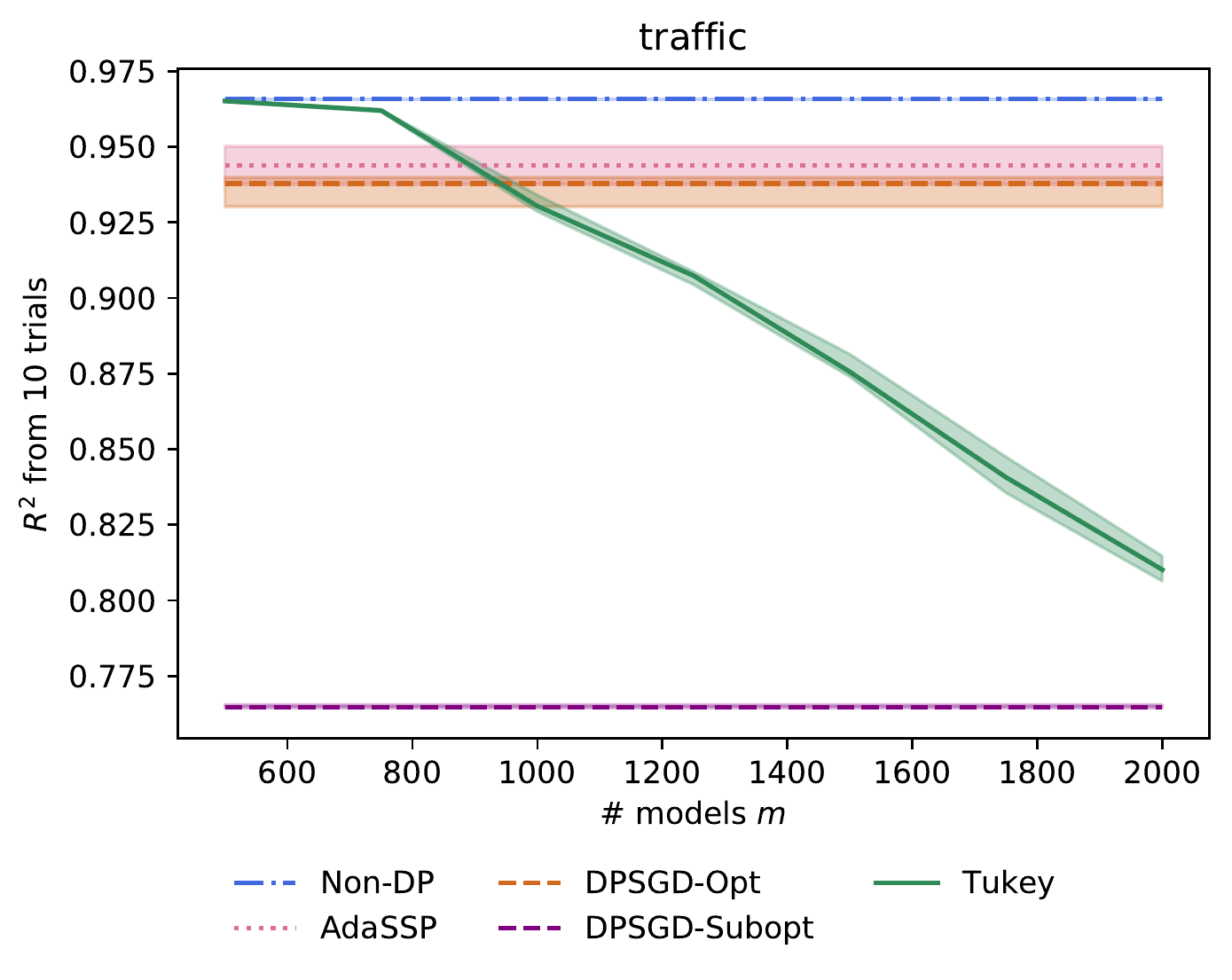}
    \includegraphics[scale=0.48]{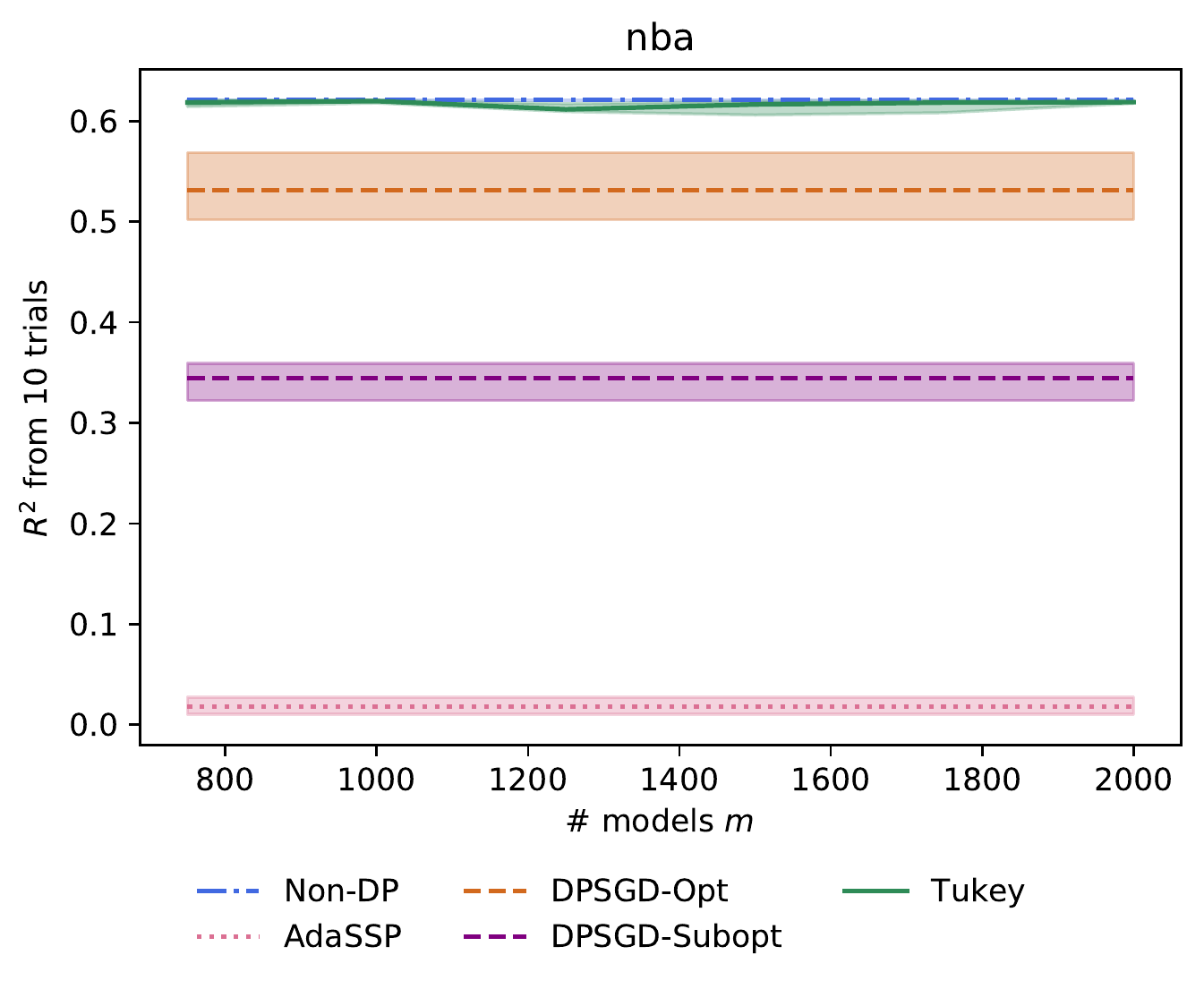}
    \includegraphics[scale=0.48]{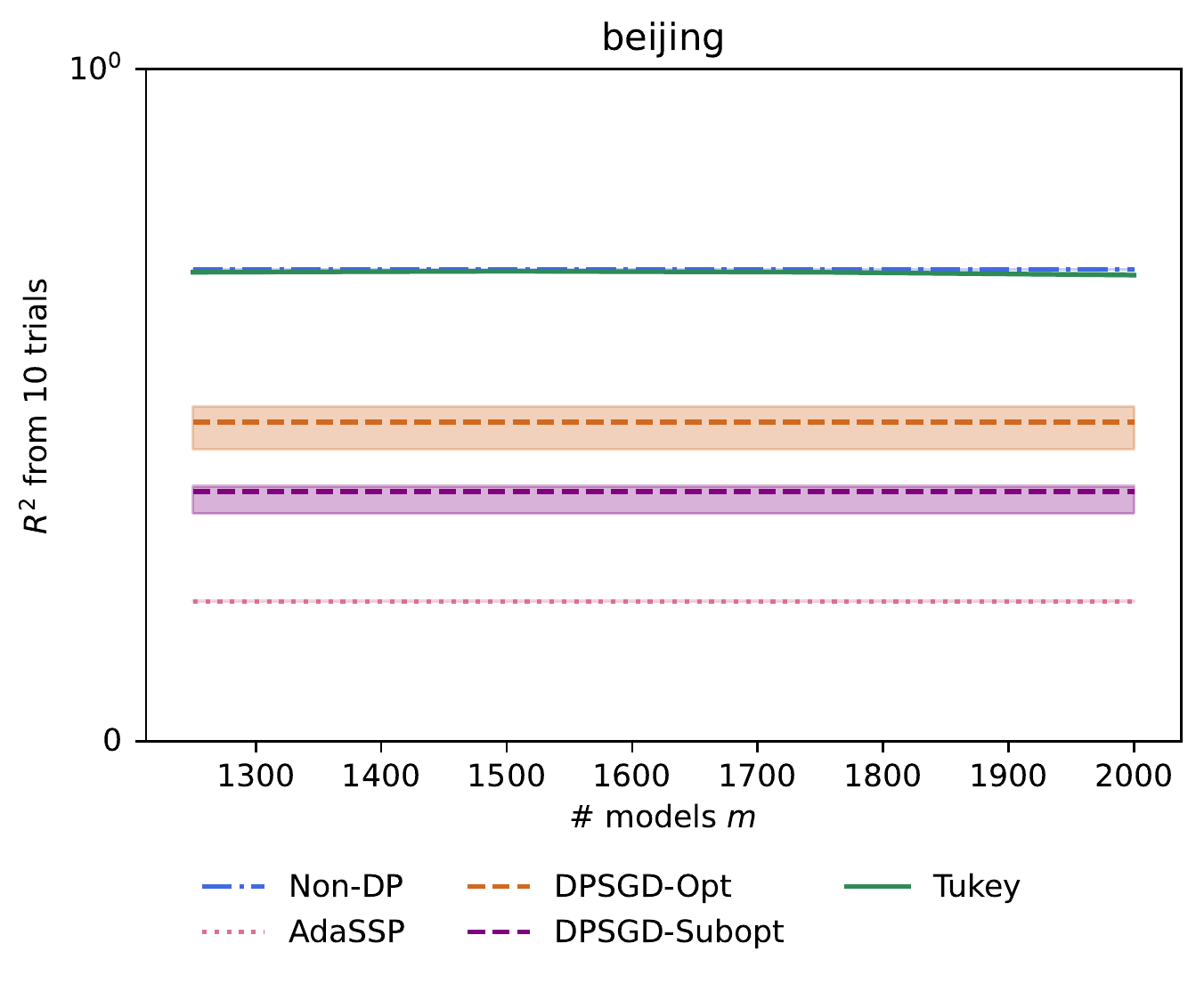}
    \includegraphics[scale=0.48]{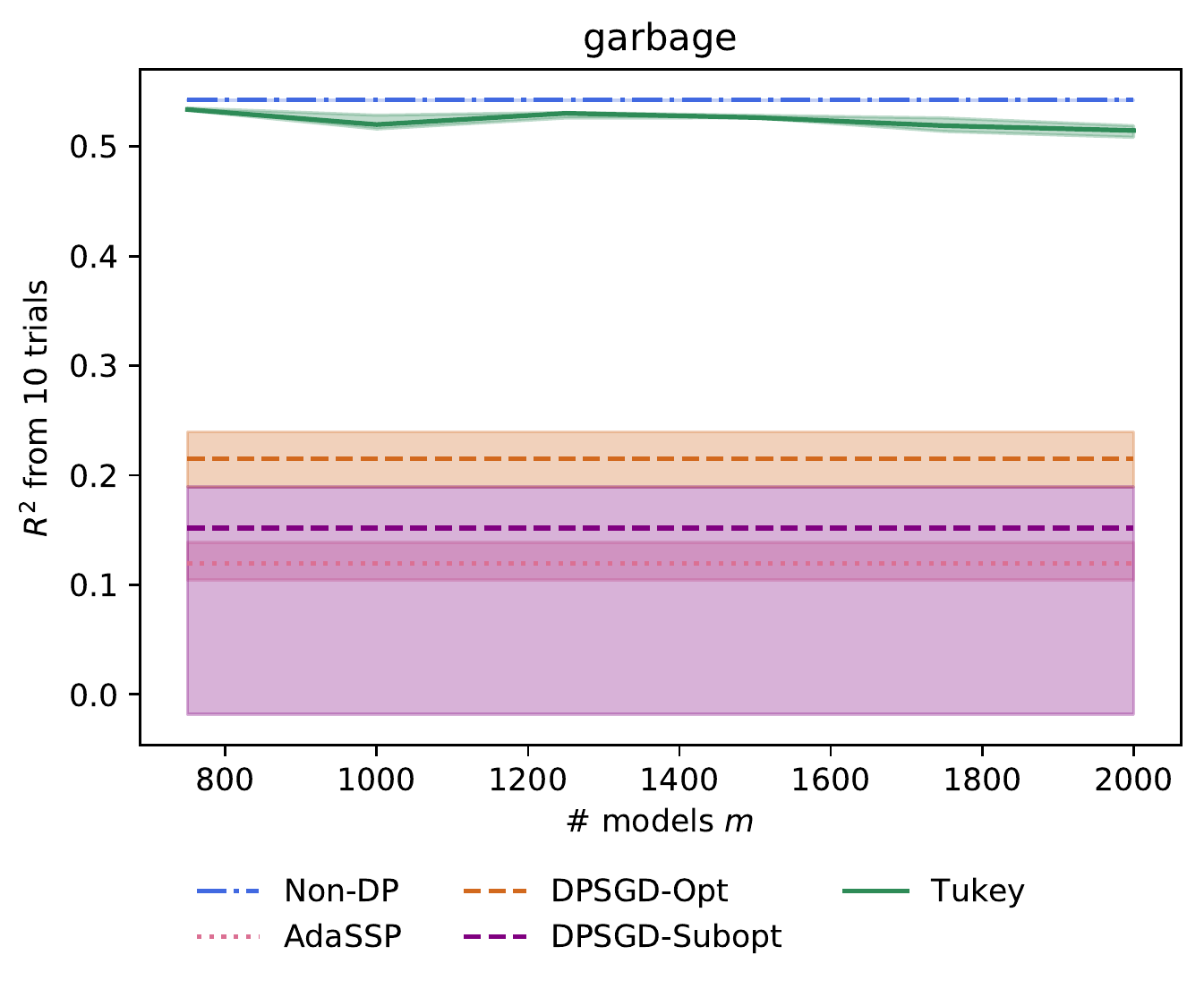}
    \includegraphics[scale=0.48]{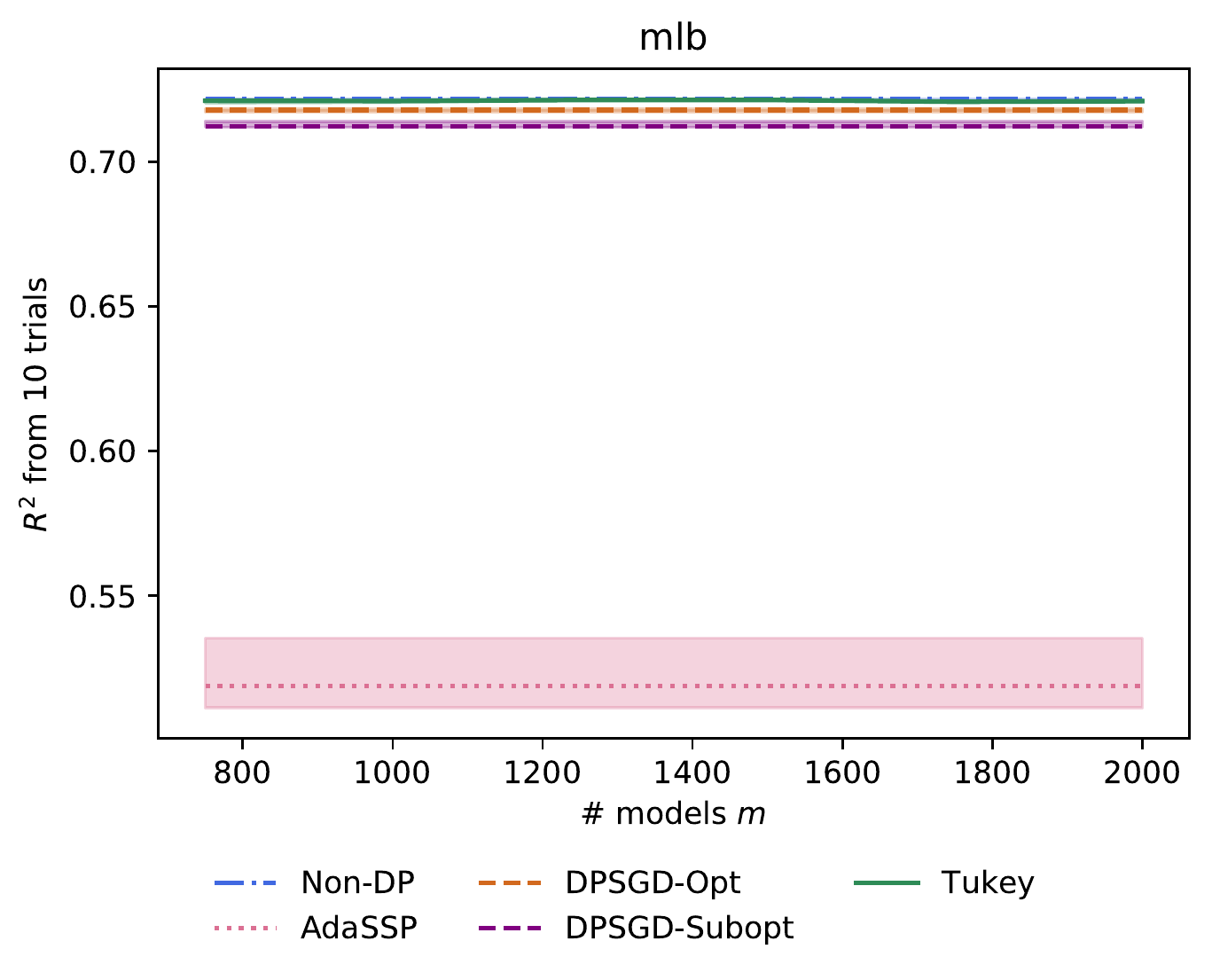}
    \caption{Plots of $R^2$ as the number of models $m$ used by $\alg$ varies. The lines mark medians and the shaded regions span the first and third quartiles. All datasets except Housing and Diamonds use 10 trials. Housing and Diamonds use 50 trials due to the variance of $\alg$. Methods other than $\alg$ appear as flat lines because they do not vary with $m$. Each plot varies the number of models $m$ in increments of 250, starting with the $m$ sufficient to pass PTR in all trials.}
    \label{fig:extended_results}
\end{figure}

\subsection{Distribution of models for all datasets}
\label{subsec:model_distributions}
Figures \ref{fig:hist-synthetic} through \ref{fig:hist-mlb} display histograms of models trained on each dataset. For each plot, we train 2,000 standard OLS models on disjoint partitions of the data and plot the resulting histograms for each coefficient. The red curve plots a Gaussian distribution with the same mean and standard deviation as the underlying data.

\begin{figure}
    \centering
    \includegraphics[width = \textwidth]{california-f.pdf}
    \caption{Histograms of models on the California dataset.}
    \label{fig:hist-housing}
\end{figure}

\begin{figure}
    \centering
    \includegraphics[width = \textwidth]{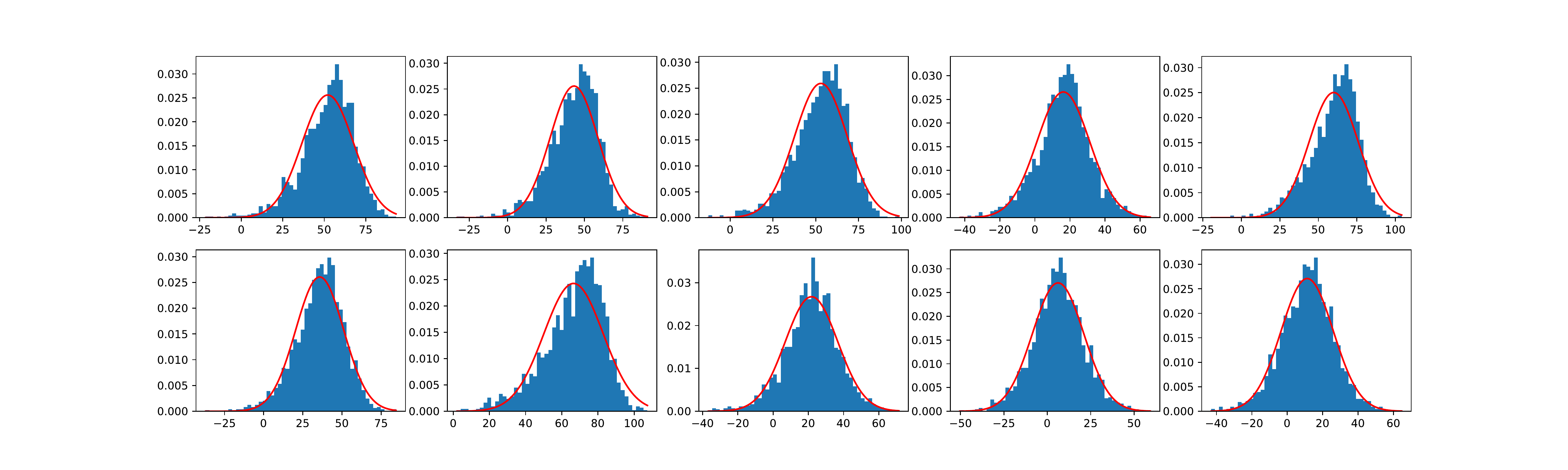}
    \caption{Histograms of models on Synthetic.}
    \label{fig:hist-synthetic}
\end{figure}

\begin{figure}
    \centering
    \includegraphics[width = \textwidth]{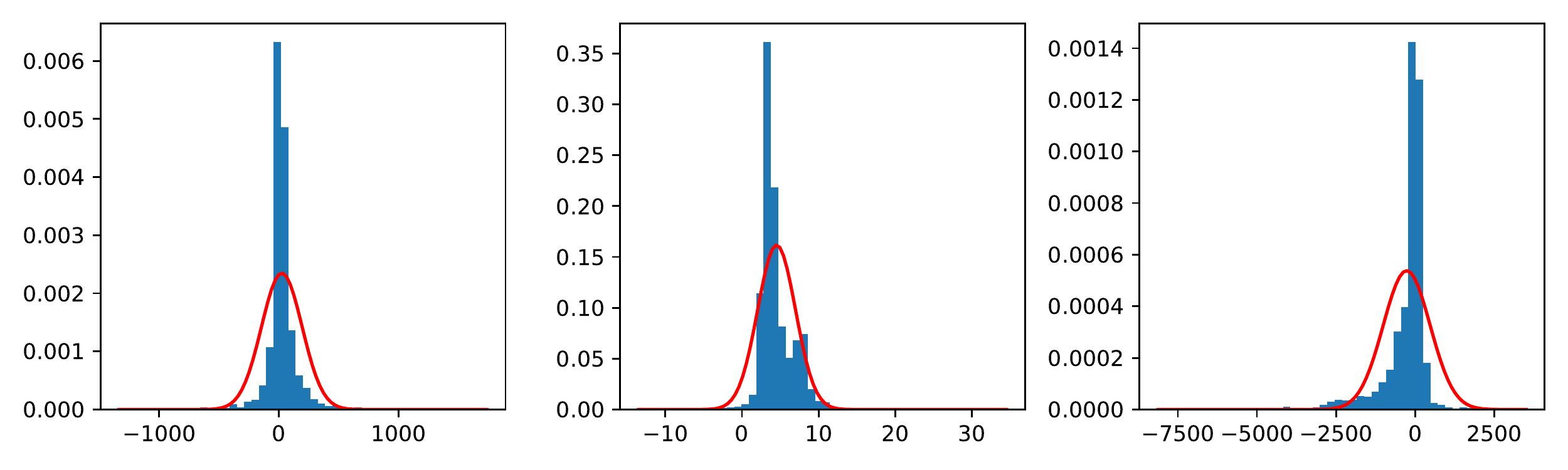}
    \caption{Histograms of models on Traffic.}
    \label{fig:hist-traffic}
\end{figure}

\begin{figure}
    \centering
    \includegraphics[width = \textwidth]{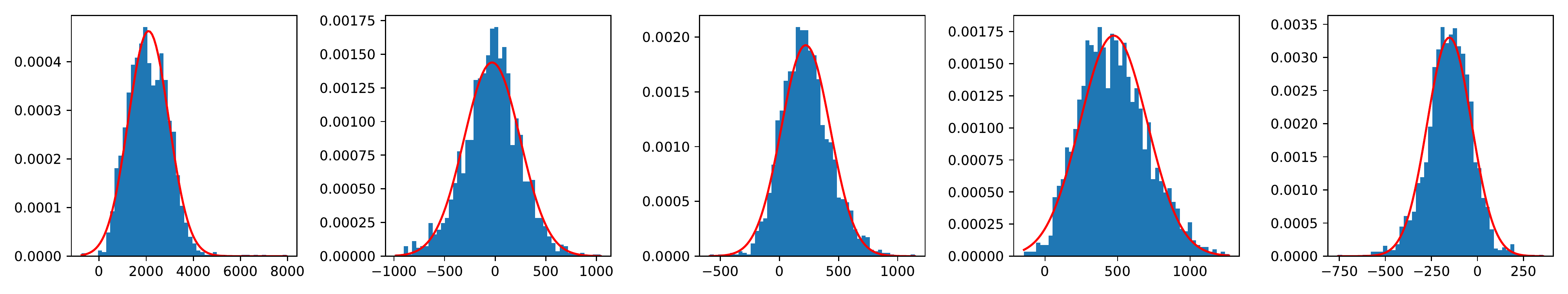}
    \includegraphics[width = \textwidth]{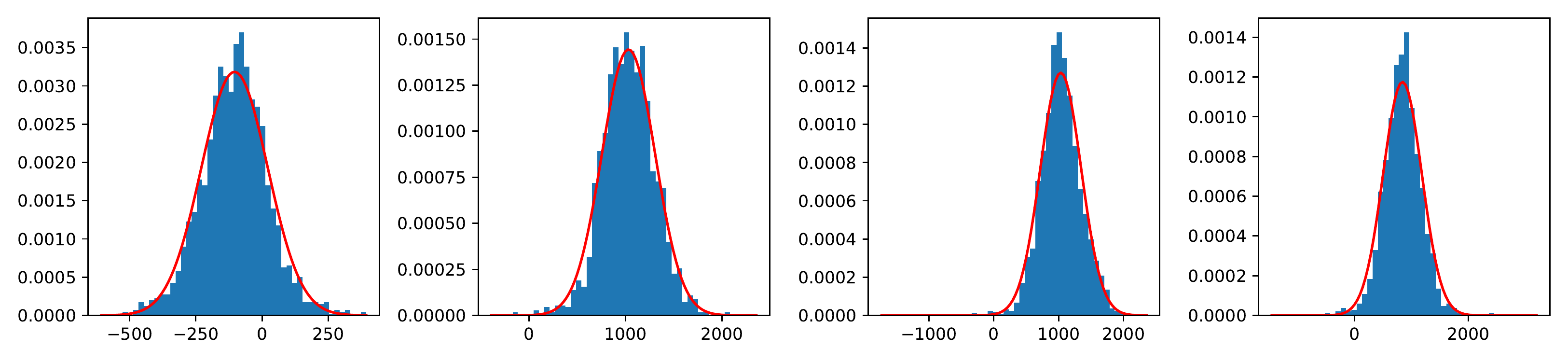}
    \caption{Histograms of models on Diamonds.}
    \label{fig:hist-diamonds}
\end{figure}

\begin{figure}
    \centering
    \includegraphics[width = \textwidth]{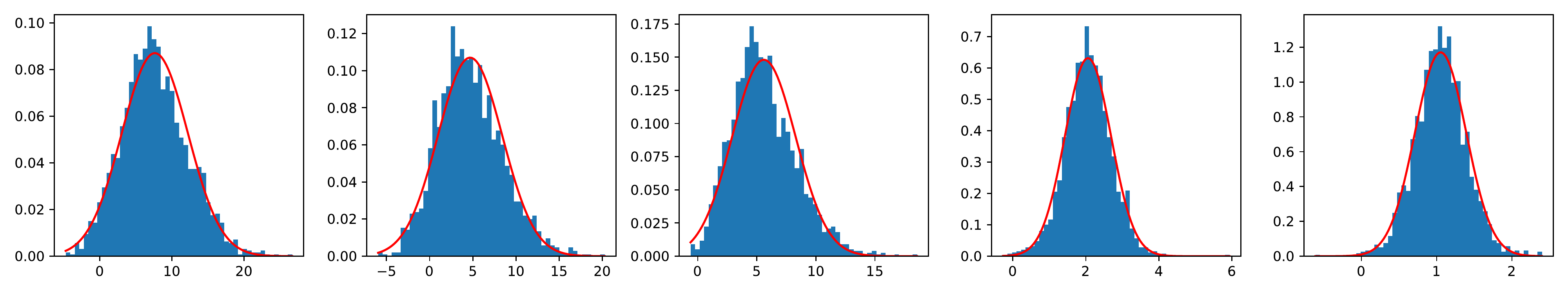}
    \caption{Histograms of models on NBA.}
    \label{fig:hist-nba}
\end{figure}

\begin{figure}
    \centering
    \includegraphics[width = \textwidth]{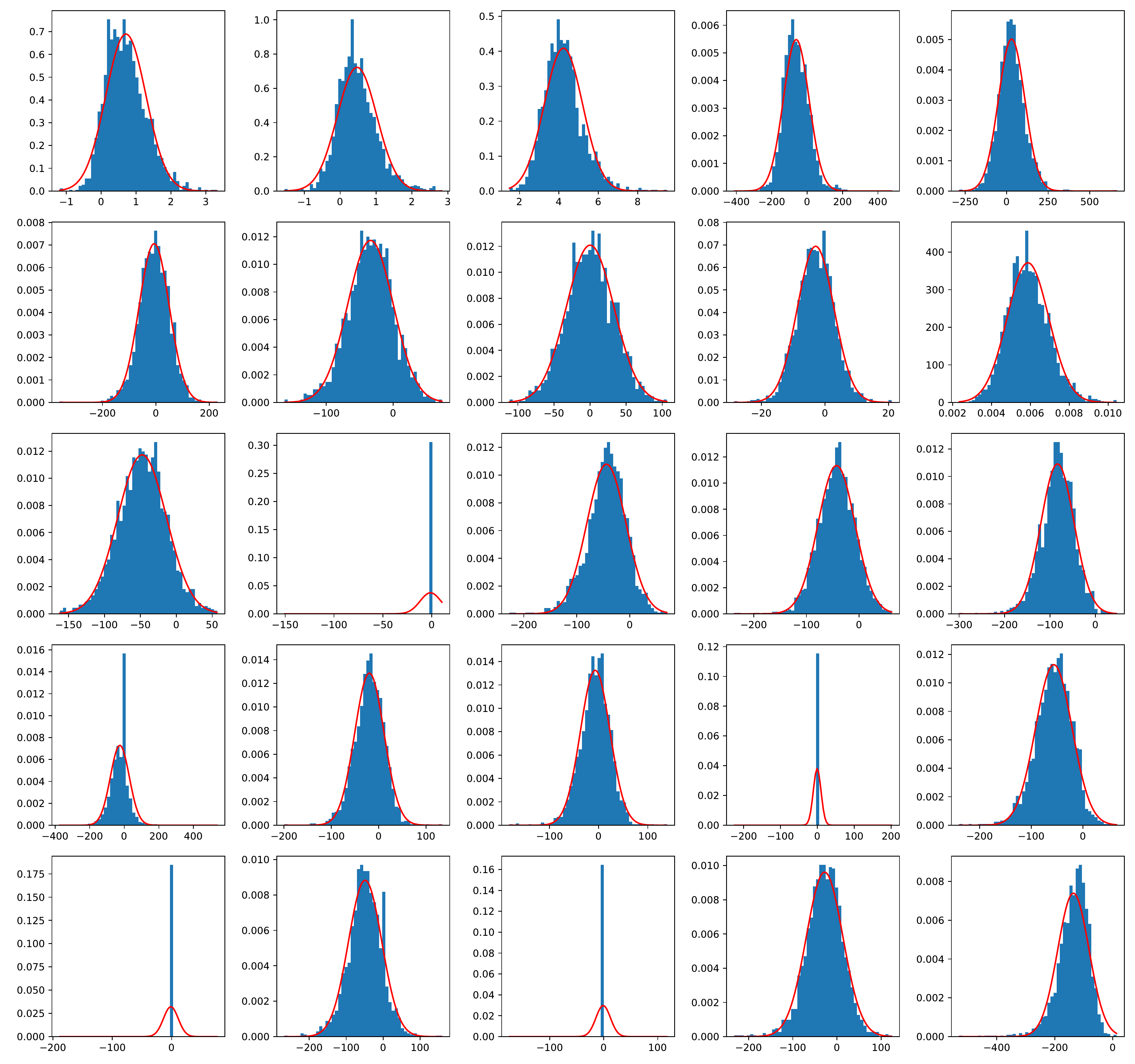}
    \caption{Histograms of models on Beijing.}
    \label{fig:hist-beijing}
\end{figure}

\begin{figure}
    \centering
    \includegraphics[width = \textwidth]{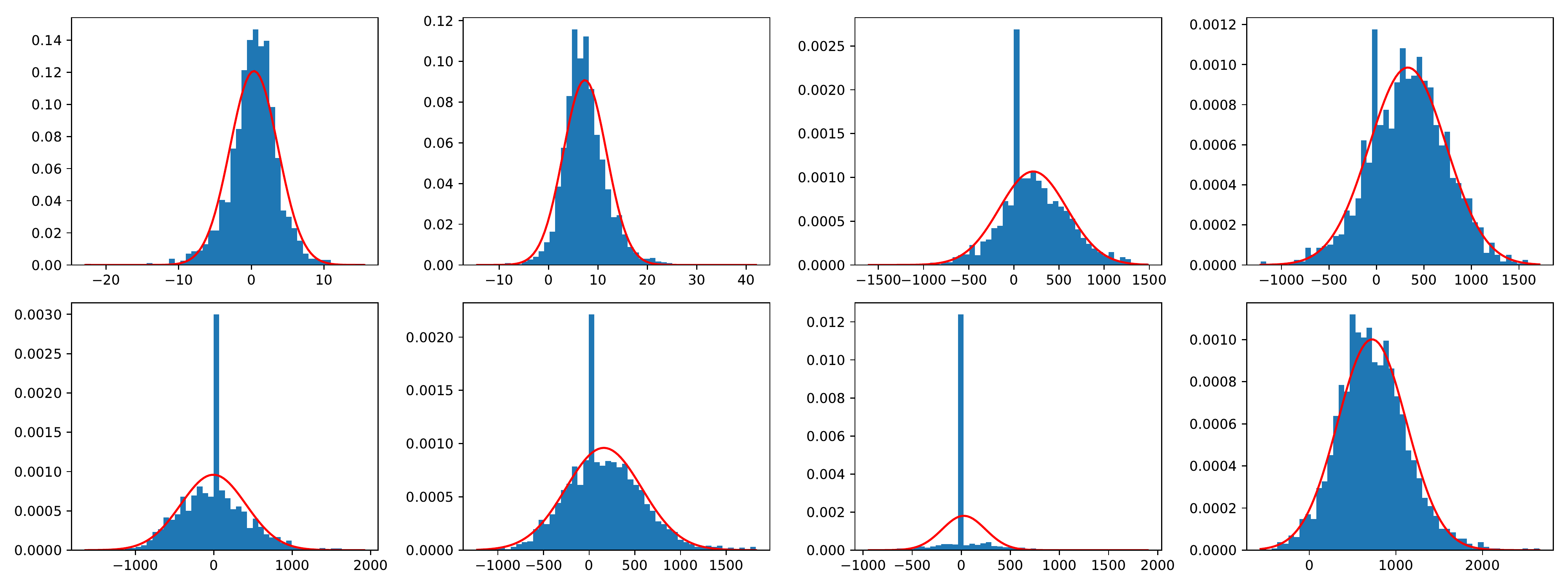}
    \caption{Histograms of models on Garbage.}
    \label{fig:hist-garbage}
\end{figure}

\begin{figure}
    \centering
    \includegraphics[width = \textwidth]{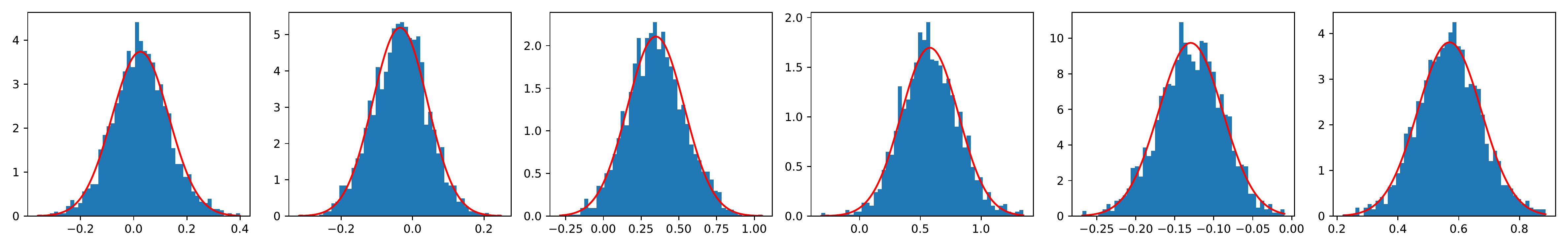}
    \includegraphics[width = \textwidth]{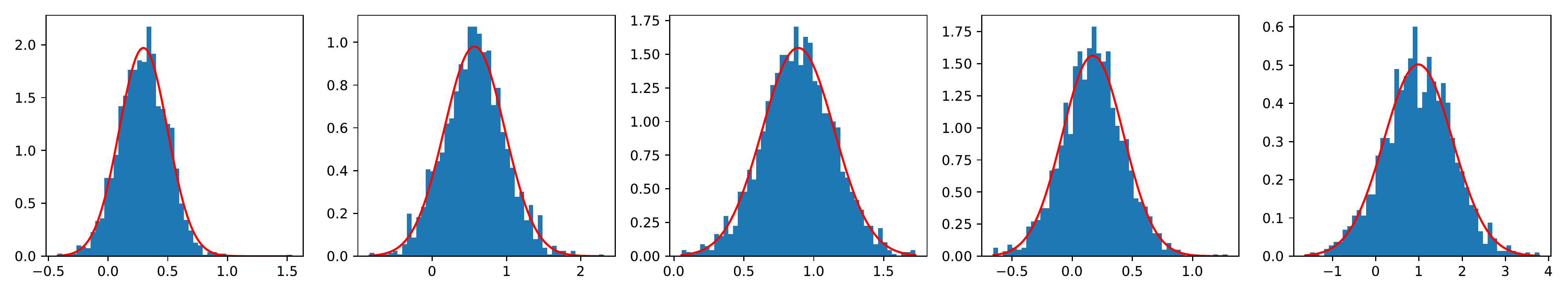}
    \caption{Histograms of models on MLB. }
    \label{fig:hist-mlb}
\end{figure}

\end{document}